\documentclass[numbers]{article}
\usepackage[final]{neurips_2023}
\usepackage[T1]{fontenc}    % use 8-bit T1 fonts
\usepackage{hyperref}       % hyperlinks
\usepackage{url}            % simple URL typesetting
\usepackage{booktabs}       % professional-quality tables
\usepackage{amsfonts}       % blackboard math symbols
\usepackage{nicefrac}       % compact symbols for 1/2, etc.
\usepackage{microtype}      % microtypography
\usepackage{mathtools}
\usepackage[dvipsnames]{xcolor}

\hypersetup{
    colorlinks=true,
    linkcolor=RubineRed,
    citecolor=Emerald,
    filecolor=magenta,      
    urlcolor=Blue,
    pdftitle={Overleaf Example},
    pdfpagemode=FullScreen,
    }
  
\usepackage{amsmath}
\usepackage{amsthm}
\usepackage{wrapfig}
\usepackage{microtype}
\usepackage{graphicx}
\usepackage{adjustbox}
\usepackage{caption}
\usepackage{algorithm}
\usepackage{algorithmic}
\usepackage[vlined,ruled,algo2e]{algorithm2e}
\newcommand{\PyComment}[1]{\ttfamily\textcolor{commentcolor}{\# #1}} 
\newcommand{\PyCode}[1]{\ttfamily\textcolor{black}{#1}}

\newcommand{\E}{\mathbb{E}}

\newcommand{\bL}{\mathcal{L}}

\usepackage{subfigure}
\usepackage{amsmath}
\newtheorem{theorem}{Theorem}[section]
\newtheorem{proposition}[theorem]{Proposition}

\DeclareMathOperator*{\minimize}{minimize}
\DeclareMathOperator*{\maximize}{maximize}

\usepackage[textsize=tiny]{todonotes}
\usepackage{multirow}
\definecolor{commentcolor}{RGB}{110,154,155}
\usepackage{ragged2e}
\usepackage{mathtools, nccmath}

\newcommand\ctn[1]{\textcolor{black}{#1}}
\newcommand\sty[1]{\textcolor{black}{#1}}

\title{Geodesic Multi-Modal Mixup for Robust Fine-Tuning}

\author{%
  Changdae Oh$^*$ \\
  University of Seoul\\
  \And
  Junhyuk So$^*$ \\
  POSTECH \\
  \AND
  Hoyoon Byun \\
  University of Seoul\\
  \And
  YongTaek Lim \\
  University of Seoul\\
  \And
  Minchul Shin \\
  KAIST \\
  \And
  Jong-June Jeon \\
  University of Seoul \\
  \And
  Kyungwoo Song$^+$ \\
  Yonsei University \\
  %\texttt{kyungwoo.song@yonsei.ac.kr} \\
}

\begin{document}

\maketitle

\def\thefootnote{*}\footnotetext{Equal contribution (changdae.oh@uos.ac.kr; junhyukso@postech.ac.kr), $^+$Corresponding author}\def\thefootnote{\arabic{footnote}}

\begin{abstract}
  Pre-trained multi-modal models, such as CLIP, provide transferable embeddings and show promising results in diverse applications. However, the analysis of learned multi-modal embeddings is relatively unexplored, and the embedding transferability can be improved. In this work, \ctn{we observe that CLIP holds separated embedding subspaces for two different modalities, and then we investigate it through the lens of \textit{uniformity-alignment} to measure the quality of learned representation. Both theoretically and empirically, we show that CLIP retains poor uniformity and alignment even after fine-tuning}. Such a lack of alignment and uniformity might restrict the transferability and robustness of embeddings. To this end, we \sty{devise} a new fine-tuning method for robust representation equipping better alignment and uniformity. First, we propose a \textit{Geodesic Multi-Modal Mixup} that mixes the embeddings of image and text to generate hard negative samples on the hypersphere. Then, we fine-tune the model on hard negatives as well as original negatives and positives with contrastive loss. Based on the theoretical analysis \sty{about} hardness guarantee and limiting behavior, we justify the use of our method. Extensive experiments on retrieval, calibration, few- or zero-shot classification (under distribution shift), embedding arithmetic, \ctn{and image captioning} further show that our method provides transferable representations, enabling robust model adaptation on diverse tasks. Code: \url{https://github.com/changdaeoh/multimodal-mixup}
\end{abstract}

\section{Introduction}
\label{s.intro}
\ctn{Witnessing the remarkable success of large-scale pre-training approaches, there have been numerous attempts to build a single \textit{general-purpose model} (so-called foundation model~\cite{bommasani2021opportunities}) rather than casting multiple task-specific models from scratch. Once built, it can be adapted (e.g., fine-tuned) to a wide variety of downstream tasks by leveraging its transferable representation.}
To construct such a general-purpose model,  besides architectural design~\cite{jaegle2021perceiver, jaegle2022perceiver, zhu2022uni} and datasets~\cite{schuhmannlaion, fanminedojo, gadre2023datacomp}, learning methods~\cite{liu2022selfsupervised, haochen2022beyond, wortsman2021robust, kumar2022fine, kirichenko2023last} have been known to be crucial for inducing transferable representation and for adapting the model robustly.

Contrastive Learning (CL)~\cite{van2018representation, he2019moco, chen2020simple} is one of the most popular learning methods that constructs a discriminative embedding space by minimizing distances between positive pairs while maximizing them between negative pairs. \sty{Based on its versatility}, CL has been widely adopted to (pre-) train models on various domains~\cite{chen2020simple, NEURIPS2020_3fe23034, gao2021simcse, somepalli2022saint}. Beyond CL on a single modality, CLIP~\cite{radford2021learning} popularized \textit{multi-modal CL}, which aims to produce close embeddings for paired image-text instances and dissimilar ones for non-paired instances. \sty{Due to its generalization capability and transferability, pre-trained CLIP and its embeddings have been employed on various downstream tasks~\cite{ wang2022clip, ning2023hoiclip, Pratt_2023_ICCV, zhou2023zegclip}.}

However, we observed an unexpected phenomenon: while CLIP's learning objective is designed to align the image and text embeddings explicitly, it has two separate subspaces for each modality, and there are large unexplored interspaces between them as in Fig. \ref{fig:overview_illustration} (illustration) and Fig. \ref{fig:dosnes_embedding} (DOSNES~\cite{lu2019doubly} visualization). We further analyzed this through the lens of \textit{uniformity-alignment}~\cite{wang2020understanding, wang2021understanding}, which is well-studied under uni-modal CL settings but unexplored on multi-modal CLs, and found that CLIP has a poor uniformity-alignment (Fig. \ref{fig:dosnes_embedding} middle) due to its bipartite embedding structure. Theoretically and empirically, we confirmed that this \sty{property} is unchanged even after fine-tuning. As discussed by Wang et al.~\cite{wang2020understanding}, low uniformity-alignment may limit the embedding transferability.
\begin{figure}[t!] 
    \vspace{-1.2em}
    \centerline{\includegraphics[width=\textwidth]{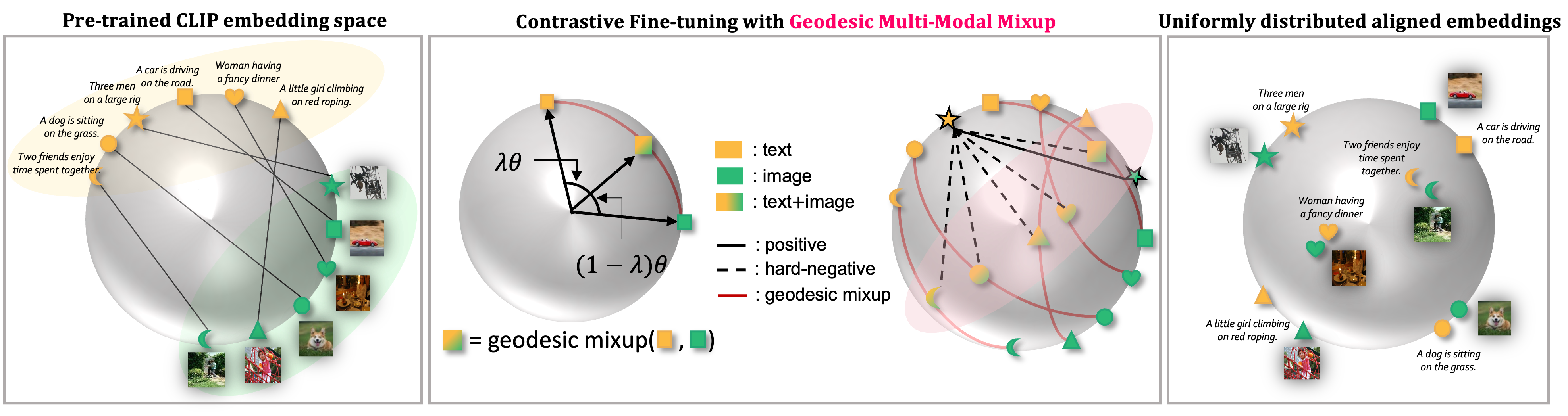}}
    \caption{(Left) \sty{pre-trained CLIP} has two separated clusters for image and text, with a large unexplored interspace. As a result, it has poor uniformity and alignment that might limit the embedding transferability. (Middle) For robust representation, our geodesic multi-modal Mixup ($m^{2}$-Mix) explores the unexploited interspace by mixing the heterogeneous embeddings on hypersphere. Generated samples by $m^{2}$-Mix are regarded as hard negatives for contrastive loss. (Right) As a result, fine-tuning with $m^{2}$-Mix induces robust representation with better transferability.}
    \vspace{-1.7em}
\label{fig:overview_illustration}
\end{figure}
Liang et al.~\cite{liang2022mind}, \ctn{concurrently (in terms of ArXiv preprints)} took a similar observation, \textit{modality gap}, which incurs a bunch of following works, and they also found that increasing temperature ($\tau$) in contrastive loss could somewhat reduce the embedding separation. However, varying the temperature requires manual engineering for each downstream task, and it incurs an inevitable trade-off between uniformity-alignment~\cite{wang2021understanding}. It raises an important question motivating this work: \textit{"How can we obtain a multi-modal representation dealing better with the uniformity-alignment for robust transfer?"}

To answer this, we propose a fundamental learning method, \textbf{geodesic multi-modal Mixup} ($m^{2}$-Mix). As shown in Fig. \ref{fig:overview_illustration}, $m^{2}$-Mix blends the embeddings of different domains, e.g., image and text, and utilizes the mixtures as new entrees of contrastive loss. Our $m^{2}$-Mix contributes to robust representation learning in three perspectives. First, it generates hard negative samples \ctn{which have high similarity with positives (incentivizing alignment)}, and they are known to be crucial for robust CL~\cite{robinsoncontrastive, kalantidis2020hard, wang2021understanding}. We provide a theoretical guarantee of the hardness of $m^{2}$-Mixed samples with empirical results. Second, $m^{2}$-Mix interpolates samples from heterogeneous domains to expand the effective embedding space (increasing uniformity) untapped by pre-trained CLIP, and this is further supported by the limiting behavior of $m^{2}$-Mix (see Prop. \ref{eq:prop}). Third, $m^{2}$-Mix produces virtual augmented samples in embedding space, so it endows a better perception on the out-of-distribution samples as well as in-distribution samples than standard CL.

Contributions: (1) We found that CLIP has a bipartite embedding space with poor uniformity-alignment, which may limit the embedding transferability for downstream tasks. (2) To address this, we propose a new fine-tuning method based on geodesic multi-modal Mixup, $m^{2}$-Mix. To our knowledge, this is the first work that adopts direct mixtures between heterogeneous embeddings. (3) We devised $m^{2}$-Mix as a geodesic Mixup to ensure the mixed embeddings are on a hypersphere, and this is beneficial for stable learning with $L_{2}$-normalized embeddings. (4) We validate our method by theoretical analyses and extensive experiments on retrieval, calibration, full-/few-shot classification (under distribution shift), embedding arithmetic, \ctn{and image captioning}.

\vspace{-0.25em}
\section{Related Works}
\vspace{-0.25em}
\paragraph{Contrastive Representation Learning}
Contrastive Learning (CL) has been widely utilized for representation learning on various domains~\cite{chen2020simple, gao2021simcse, li2019graph, radford2021learning, jia2021scaling}. The goal of CL is to learn an embedding function that maps data into an embedding space so that semantically similar data have close embeddings. Most of CLs adopt $L_{2}$-normalized embedding~\cite{van2018representation, he2019moco, chen2020simple, radford2021learning} for stable learning~\cite{xu2018spherical}, and it makes the embeddings lie on a unit hypersphere. There are key properties to analyze the hyperspherical embeddings, so-called \textit{Uniformity and Alignment}~\cite{wang2020understanding}. Better uniformity and alignment can be regarded as a higher embedding transferability, so it is related to performance on downstream tasks. However, uniformity-alignment analysis on multi-modal settings has not been sufficiently explored yet~\cite{goel2022cyclip}, and we found that CLIP has poor uniformity-alignment before and even after fine-tuning. Meanwhile, it is known that hard negatives for constructing contrastive pairs are necessary to increase the robustness of contrastive representation learning~\cite{NEURIPS2020_f7cade80,zhang2021unleashing,wang2021understanding,zhu2021improving}. To this end, we devise a new approach for multi-modal CL, which generates hard negatives and achieves better uniformity-alignment for \textit{robust and transferable} representation.

\paragraph{Mixup}
\sty{There have been numerous papers that claim} Mixup~\cite{zhang2018mixup} is helpful for robust representation learning and alleviates the overconfident problems and failure under distribution shift as well as the in-distribution accuracy~\cite{thulasidasan2019mixup, han2022umix, han2023reweighted}. Based on such success of Mixup, many works are adopting it as a component of learning frameworks (on vision~\cite{verma2019manifold, yun2019cutmix, kim2021lada, kim2020puzzle}, language~\cite{guo2020nonlinear, yoon2021ssmix, yang2022enhancing} and graph~\cite{wang2021mixup, verma2021graphmix}). Besides, CL with Mixup to help representation learning has also been studied. While traditional CL annotates as 1 for positive pairs and 0 for negative pairs, $i$-Mix~\cite{lee2021imix} and Un-Mix~\cite{shen2022unmix} interpolate the images with ratio $\lambda$, and adopt contrastive loss with pseudo labels according to the mixing ratio $\lambda$. However, there are few works on Mixup for multi-modal learning~\cite{fang2022stemm, cheng2023mixspeech}. STEMM~\cite{fang2022stemm} mixes the speech and text features to align them in shared embedding space, but it cannot be widely used for multi-modal learning because of its architecture-specific design. Therefore, we propose $m^{2}$-Mix, that can be broadly adopted for robust multi-modal representation learning.

\paragraph{Multi-modal Learning} 
\sty{To build a universal intelligence system that simultaneously processes multi-modal data streams}, some focus on developing unified frameworks for multi-modal perception~\cite{jaegle2021perceiver, jaegle2022perceiver, baevski2022data2vec, zhu2022uni, li2023uni}, others aim at representation learning under multi-modal correspondence~\cite{radford2021learning, jia2021scaling, yang2022vision, singh2022flava}. This work focuses on CLIP~\cite{radford2021learning}, the representative multi-modal representation learner. Based on the generalizable embedding space, CLIP has been utilized on numerous tasks~\cite{luo2021clip4clip,fang2021clip2video,zerodetection,textdiffusion}, and there are many attempts to fine-tune it efficiently~\cite{zhou2022learning, zhang2021tip, khattak2022maple, ouali2023black}. Moreover, robust fine-tuning methods have also been actively studied for generalization on both in- and out-of-distribution data. Wortsman et al.~\cite{wortsman2021robust} propose a weight-ensemble method that interpolates the pre-trained and fine-tuned weights, Kumar et al.~\cite{kumar2022fine} adopt a two-stage learning scheme that combines linear probing and fine-tuning, \ctn{and Goyal et al.~\cite{goyal2023finetune} adopt contrastive loss during fine-tuning on image classification.} However, previous works have only focused on uni-modal settings and downstream tasks are restricted to the image classification. This paper provides robust fine-tuning methods for multi-modal settings.

\section{Observations and Problem Define} \label{sec:obs}
For a given batch $D = \{ x_{i},y_{i}\}_{i=1}^{M}$ of $M$ instances where $(x_{i},y_{i})$ denotes the $i$-th image-text pair, the goal of multi-modal learning is to learn the pair-wise relations between image and text. To do this, CLIP~\cite{radford2021learning} learns image encoder $f(\cdot ; \theta_1)$ and text encoder $g(\cdot ; \theta_2)$, so that embedding pairs \sty{$\{I_{i},T_{i}\}_{i=1}^{M}=\{f(x_i;\theta_1), g(y_i;\theta_2)\}_{i=1}^{M}$} get closer to each other. Note that $I_i$ and $T_i$ are $L_{2}$-normalized unit vectors \sty{in most vision-language models}, and they lie on the hypersphere. For simplicity, we omit the learning parameters $\theta_1$ and $\theta_2$ from all the following \sty{equations}. CLIP adopts InfoNCE-style~\cite{van2018representation} loss $C(\cdot, \cdot)$ to enforce the similarity between positive pairs $(x_i,y_i)$ and distance among all remain negative pairs $(x_i,y_j)$. This is formulated as below (Eq. \ref{eq:clip_loss}):
\begin{align}
\tiny
    \vspace{-0.7em}
    C(I,T) = \frac{1}{M}\sum_{i=1}^{M}-\log\frac{\exp({sim}(I_i,T_i)/\tau)}{\sum_{j=1}^{M}\exp{({sim}(I_i, T_j)/\tau)}}  \;\;\;\;\;\; \bL_{\text{CLIP}} = \frac{1}{2}(C(I,T)+C(T,I)) \label{eq:clip_loss}
    \vspace{-0.7em}
\end{align}
Like many other CL approaches, CLIP uses a dot product as a similarity calculation ${sim}(\cdot,\cdot)$ between two vectors and governs $\tau$ as a \textit{learnable} temperature that controls the scale of measured similarity.
Now, we analyze the multi-modal embedding in terms of uniformity and alignment, well-known properties in uni-modal CL literature~\cite{wang2020understanding,wang2021understanding} but unexplored in multi-modal settings. Alignment\footnote{Original formulation of alignment is $ \E_{(x_{i},y_{i})}[\| f(x_{i})-g(y_{i}) \|_{2}^{2}]$, which ignores the similarity among negative pairs. We found that it is not directly related to downstream performances, so we modified the alignment to relative-alignment that handles the similarities of both positive and negative pairs.} (Eq. \ref{eq:alignment}) evaluates the difference between distances (or similarities) of positive pairs compared with the hardest negative pairs, while uniformity (Eq. \ref{eq:uniformity}) indicates how uniformly the data is distributed. The greater alignment and uniformity denote the more transferable representation~\cite{wang2020understanding,wang2021understanding}.

\footnotesize
\begin{align}
\vspace{-0.5em}
\text{Alignment} := & \medmath{-\E_{(x_{i},y_{i})}\big[
\| f(x_{i})-g(y_{i}) \|_{2}^{2}-
\min_{k\neq i}{\| f(x_{i})-g(y_{k}) \|_{2}^{2}}\big]} 
\label{eq:alignment}
% \nonumber
\\
\text{Uniformity} := & \medmath{-\log {\E_{(x_{i},y_{j})}
\big[\exp{(-2 \| f(x_{i})-g(y_{j}) \|_{2}^{2})} \big]}} 
\label{eq:uniformity}
% \nonumber
\vspace{-0.5em}
\end{align}
\normalsize
Fig. \ref{fig:dosnes_embedding} shows (Left) DOSNES~\cite{lu2019doubly} and (Middle) uniformity-alignment of CLIP's embedding space on image-caption datasets after pre-trained (ZS), naively fine-tuned (FT), and fine-tuned with our method (Ours). Pre-trained CLIP embeddings are separated by two subspaces for image and text with wide interspace between them, and even after fine-tuning, this structure remains unchanged. As a result, CLIP and its fine-tuned embedding have poor uniformity and alignment, and this might limit the transferability and robustness of embeddings to downstream data. \sty{From this observation, we aim to learn a more transferable and robust multi-modal representation with \textit{multi-modal mixup}.} To validate the transferability and robustness of embeddings, besides evaluating uniformity-alignment, we perform diverse downstream tasks: retrieval, few-/zero-shot classification on in-distribution (ID), out-of-distribution (OOD), classification under modality missing, embedding arithmetic, and \ctn{image captioning} in Section \ref{sec:result}. Here, our method produces well-aligned and uniformly distributed embeddings resulting in consistent downstream performance gains (Fig. \ref{fig:dosnes_embedding} Right\footnote{For each method, FT and Ours, we compute scores for each bar by averaging all values of FT and $m^{3}$-Mix in corresponding Tables, except OOD Acc computed by averaging values of WiSE-FT, LP-FT, and \ctn{MaPLe.}}).
\begin{figure}
\vspace{-1.5em}
 \centering
 \includegraphics[width=0.98\linewidth]{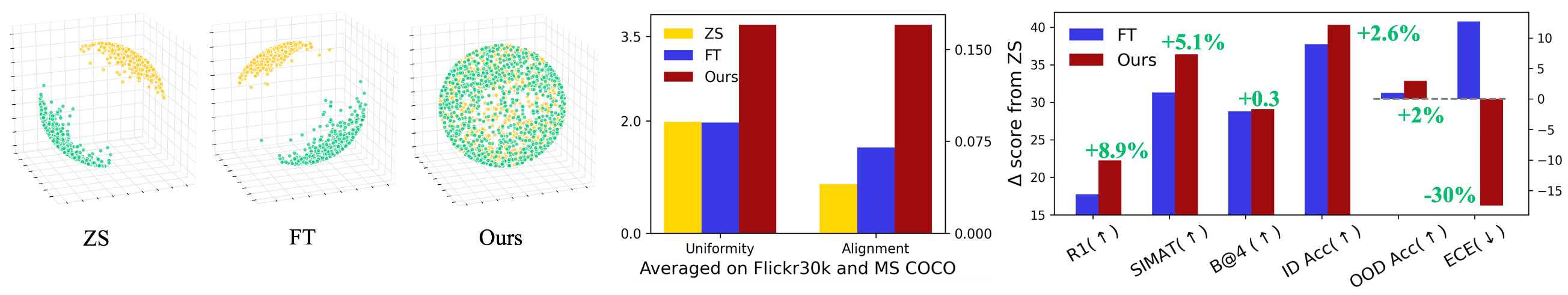} 
\caption{(Left) DOSNES~\cite{lu2019doubly} visualization of CLIP's embedding space on Flickr 30k~\cite{plummer2015flickr30k}. Greens and oranges denote the image and text embedding, respectively. Embeddings of pre-trained (ZS) and naively fine-tuned (FT) ones have two separate clusters with low uniformity and alignment (Middle), which may limit embedding transferability. Multi-modal mixup induces more aligned and uniformly distributed embeddings, (Right) which largely improves downstream performance, including retrieval (R1), embedding arithmetic (SIMAT), \ctn{image captioning (BLUE@4)}, and classification (ID Acc. and OOD Acc.), and uncertainty calibration (ECE).}
\label{fig:dosnes_embedding}
\vspace{-1.5em}
\end{figure}
\section{Methodology}
\subsection{Understanding Geodesic Multi-Modal Mixup} 
From our finding that CLIP embedding is not robust enough, we improve it by fine-tuning CLIP via contrastive loss with virtual hard negative samples. First, we generate hard negatives by mixing the image and text embeddings via \textbf{multi-modal Mixup}, $m^{2}$-Mix. Note that CLIP's $L_{2}$-normalized embeddings lie on a hypersphere, and the mixed ones are \sty{also desirable} to lie on that hypersphere. However, original Mixup~\cite{zhang2018mixup, verma2019manifold} does not guarantee that mixed data lie on the hypersphere. Therefore, we devise a new type of Mixup, \textbf{geodesic Mixup} (defined as Eq. \ref{eq:gmmix}). Geodesic Mixup interpolates two data points on geodesic path, so it ensures that mixed samples lie on the hypersphere.
\footnotesize
\begin{flalign}
    \sty{m_{\lambda}(\vec{a},\vec{b})} &= \vec{a}\frac{\text{sin}(\lambda \theta)}{\text{sin}(\theta)} + \vec{b}\frac{\text{sin}((1-\lambda) \theta)}{\text{sin}(\theta)}, \;\;\; \text{where} \; \theta = \text{cos}^{-1}(\vec{a} \cdot \vec{b}) 
    \text{ and } \lambda \sim \text{Beta}(\alpha,\alpha)
    \label{eq:gmmix}
\end{flalign}
\normalsize
Where $\lambda$ is handled by a hyperparameter $\alpha$ in Beta distribution. It is well-known that $L_{2}$-normalized embeddings are crucial for metric learning~\cite{schroff2015facenet, liu2017sphereface} thus adopted by most of the modern contrastive learning methods~\cite{he2019moco, chen2020simple, radford2021learning}. Therefore, it is necessary that mixture samples lie on the unit sphere, which is guaranteed by our geodesic Mixup. Comparison between geodesic Mixup and standard Mixup following manual $L_{2}$-normalization in Tab. \ref{tab:abla_linear_tau}. Then, we \sty{utilize} the generated hard negatives for contrastive loss by replacing the original negatives (See Eq. \ref{eq:m2_mix_loss}). Here, we only change the denominator term and retain the numerator that compares the similarity between the positive pairs. 
\footnotesize
\begin{align}
%\vspace{-1.em}
C_{m^{2}}(I,T) = \frac{1}{M}\sum_{i=1}^{M}-\log\frac{\exp(I_i\cdot T_i /\tau)}{\exp(I_i\cdot T_i /\tau) + \sum_{j \neq i}\exp(I_i\cdot m_{\lambda}(I_j,T_j)/\tau)} \label{eq:m2_mix_loss} \\
\bL_{m^{2}\text{-Mix}} = \frac{1}{2}(C_{m^{2}}(I,T)+C_{m^{2}}(T,I)) \nonumber
%\vspace{-1.1em}
\end{align}
\normalsize
Because CLIP has two-sided polarized embeddings, the similarity between the original image (or text) embedding and mixed embedding is larger than that between the original image and text embeddings. Therefore, $m^{2}$-Mix generates harder negative samples compared with original negatives. The existence of hard negative samples is known to be important in uni-modal CLs~\cite{wang2020understanding,wang2021understanding}, and we validate it under multi-modal CL settings in Section \ref{sec:result}. Besides, we provide a theoretical result on hardness guarantee and limiting behavior of $\bL_{m^{2}\text{-Mix}}$ in the following paragraph.

\paragraph{Theoretical Analysis} In Section \ref{sec:obs}, we observed that naive fine-tuning with standard contrastive loss could not reduce the embedding separation. We speculate this limitation is derived from (1) lack of hard negative samples and (2) vanished learnable $\tau$ (0.01) in $\bL_{\text{CLIP}}$. As shown by Wang et al.~\cite{wang2021understanding}, when $\tau$ approaches zero, the contrastive loss behaves like \textit{triplet loss} with zero-margin. This means that if the similarity between positive samples is greater than that of the nearest negative, there are no incentives to further pull or push the embeddings of positive and negative samples. Moreover, \sty{due to CLIP's bipartite embedding space, it might lack sufficient hard negative samples} to incentivize the models to pursue more alignment and uniformity. Therefore, we argue that \textbf{hard negatives are necessary for multi-modal CL} when there is an embedding space modality gap~\cite{liang2022mind}.

\begin{theorem}[Hardness of $m^2$-Mixed samples]
Let's assume that two random variables $x_{1}$ and $x_{2}$ follow the $M_{d}(\mu_{1},\kappa)$ and $M_{d}(\mu_{2},\kappa)$, von Mises–Fisher distribution with mean direction $\mu_{1}, \mu_{2}$ and concentration parameter $\kappa$ in $\mathbb{R}^{d}$, respectively.
Let $\widetilde{x}=x_{1}+x_{2}$ and $d=2$.
Then, $D_{KL}(p(x_{1})||p(\widetilde{x})) \leq D_{KL}(p(x_{1})||p(x_{2}))$ for sufficiently large $\kappa$.
\label{eq:thm}
\end{theorem}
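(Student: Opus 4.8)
The plan is to reduce the claim to a one-dimensional computation about the concentration parameter of the relevant von Mises--Fisher densities. The key observation is that for $d=2$ the von Mises--Fisher distribution $M_{2}(\mu,\kappa)$ is (up to identifying $S^{1}$ with angles) the standard von Mises distribution, and the sum $\widetilde{x}=x_{1}+x_{2}$ is \emph{not} a unit vector; I would therefore first decide what $p(\widetilde{x})$ means here --- namely, the author presumably intends $\widetilde{x}$ normalized back to the sphere (this is what $m^2$-Mix actually does via geodesic Mixup), so $\widetilde{x}/\|\widetilde{x}\|$ has a \emph{mean direction} bisecting $\mu_{1}$ and $\mu_{2}$ and, crucially, an \emph{effective concentration} $\widetilde{\kappa}$ that grows with $\kappa$. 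The strategy then splits into two pieces: (i) compute the mean direction and approximate concentration of the (normalized) mixture as $\kappa\to\infty$; (ii) plug into a closed-form or asymptotic expression for the KL divergence between two von Mises distributions and compare.

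First I would write down the KL divergence between $M_{2}(\mu_{a},\kappa_{a})$ and $M_{2}(\mu_{b},\kappa_{b})$ in closed form using modified Bessel functions $I_{0},I_{1}$: letting $\phi$ be the angle between $\mu_{a}$ and $\mu_{b}$,
\[
D_{KL} = \log\frac{I_{0}(\kappa_{b})}{I_{0}(\kappa_{a})} + \kappa_{a}\frac{I_{1}(\kappa_{a})}{I_{0}(\kappa_{a})} - \kappa_{b}\cos\phi\,\frac{I_{1}(\kappa_{a})}{I_{0}(\kappa_{a})}.
\]
For $D_{KL}(p(x_{1})\|p(x_{2}))$ we have $\kappa_{a}=\kappa_{b}=\kappa$ and $\phi=\angle(\mu_{1},\mu_{2})=:\phi_{12}$, so this collapses to $\kappa\,A_{2}(\kappa)(1-\cos\phi_{12})$ where $A_{2}(\kappa)=I_{1}(\kappa)/I_{0}(\kappa)$. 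For $D_{KL}(p(x_{1})\|p(\widetilde{x}))$ we have $\kappa_{a}=\kappa$, $\kappa_{b}=\widetilde{\kappa}$, and $\phi=\angle(\mu_{1},\widetilde{\mu})$ which is at most $\phi_{12}/2$ (smaller, in fact, unless the two base concentrations are equal, which they are). So the second piece is $\log(I_{0}(\widetilde{\kappa})/I_{0}(\kappa)) + \kappa A_{2}(\kappa) - \widetilde{\kappa}\cos(\phi_{12}/2)A_{2}(\kappa)$. The comparison then reduces to showing
\[
\log\frac{I_{0}(\widetilde{\kappa})}{I_{0}(\kappa)} + \kappa A_{2}(\kappa)\cos\phi_{12} \;\le\; \widetilde{\kappa} A_{2}(\kappa)\cos(\phi_{12}/2),
\]
and I would control this using the large-$\kappa$ asymptotics $A_{2}(\kappa)\to 1$, $\log I_{0}(\kappa) = \kappa - \tfrac12\log\kappa + O(1)$, together with a lower bound on $\widetilde{\kappa}$.

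The main obstacle, and the step I expect to need the most care, is pinning down $\widetilde{\kappa}$ precisely enough. The clean fact is that for large $\kappa$ the normalized mixture of two nearly-deterministic directions is itself nearly deterministic with concentration comparable to $\kappa$; more precisely one expects $\widetilde{\kappa} \approx \kappa\cos(\phi_{12}/2)\cdot c$ for a constant $c$ close to $1$ (the resultant-length heuristic: averaging two unit-ish vectors at angle $\phi_{12}$ gives resultant length $\approx\cos(\phi_{12}/2)$, and the effective concentration scales with the resultant). Substituting $\widetilde{\kappa}\asymp\kappa\cos(\phi_{12}/2)$ into the inequality above, the dominant $\kappa$-linear terms become $\kappa\cos\phi_{12}$ on the left versus $\kappa\cos^{2}(\phi_{12}/2) = \kappa\cdot\tfrac{1+\cos\phi_{12}}{2}$ on the right, and since $\cos\phi_{12} \le \tfrac{1+\cos\phi_{12}}{2}$ always holds (with equality only when $\phi_{12}=0$), the inequality is strict for $\phi_{12}>0$ once $\kappa$ is large enough to dominate the $O(\log\kappa)$ correction from the Bessel-ratio terms; the $\phi_{12}=0$ case is trivial since both divergences vanish. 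I would finish by making the "sufficiently large $\kappa$" threshold explicit in terms of $\phi_{12}$, or simply invoke that all error terms are $o(\kappa)$ while the gap is $\Theta(\kappa)$.
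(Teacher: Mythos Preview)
Your proposal follows essentially the same route as the paper: write the KL divergence between two von Mises densities in closed form via $I_{0},I_{1}$ and $A(\kappa)=I_{1}(\kappa)/I_{0}(\kappa)$, approximate the law of the mixture by a von Mises with some effective concentration $\widetilde{\kappa}$, and close with large-$\kappa$ Bessel asymptotics. The paper cites exactly the same KL identity (attributed to Jammalamadaka et al.) and the same expansions $I_{0}(\kappa)\approx e^{\kappa}/\sqrt{2\pi\kappa}\,(1+1/8\kappa)$, $I_{1}(\kappa)\approx e^{\kappa}/\sqrt{2\pi\kappa}\,(1-3/8\kappa)$, so structurally you are aligned.

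The one substantive difference is how $\widetilde{\kappa}$ is identified. The paper does not use your resultant-length heuristic $\widetilde{\kappa}\asymp\kappa\cos(\phi_{12}/2)$; instead it invokes the von Mises convolution approximation (citing Markovi\'c), which gives $\widetilde{\mu}=\mu_{1}+\mu_{2}$ and $\widetilde{\kappa}=A^{-1}\!\big(A(\kappa)A(\kappa)\big)$, hence $\widetilde{\kappa}\approx\kappa/2$ for large $\kappa$, independent of the angle. Both estimates make the final inequality go through: in the paper's version the negative term $\log I_{0}(\widetilde{\kappa})-\log I_{0}(\kappa)\approx -\kappa/2$ already dominates and the comparison is immediate; in yours the trigonometric comparison $\cos\phi_{12}\le(1+\cos\phi_{12})/2$ does the work. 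Your interpretation (vector sum then renormalize to $S^{1}$) is arguably closer to what geodesic Mixup actually does, while the paper's convolution formula corresponds to angular addition on the circle. One small correction to your write-up: the term $\log\big(I_{0}(\widetilde{\kappa})/I_{0}(\kappa)\big)$ is $\Theta(\kappa)$, not lower order, so it should not be discarded as ``non-dominant''; but since $\widetilde{\kappa}<\kappa$ this term is negative and only helps the inequality, so your leading-order comparison still suffices.
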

Theorem \ref{eq:thm} shows that KL-divergence between the pair of an original sample and a mixed sample is less (more confused with positive) than that of another original sample (proof in Supp. D). Meanwhile, because the converged $\tau$ of CLIP is significantly small (i.e., 0.01), it will be reasonable to consider an extreme case: when $\tau \to 0^{+}$. Based on Proposition \ref{eq:prop}, we argue that ones can explicitly enforce uniformity-alignment in multi-modal contrastive learning by equipping $m^2$-Mix with $\bL_{\text{CLIP}}$.

\begin{proposition}[Limiting behavior of $\bL_{\text{CLIP}}$ with $\bL_{m^{2}\text{-Mix}}$]
For sufficiently large $M$, as the temperature of contrastive loss $\tau \to 0^{+}$, the $\bL_{\text{CLIP}}$ and $\bL_{m^{2}\text{-Mix}}$ converges to the triplet loss with zero-margin (i.e., corresponding to negative \text{Alignment}) and negative \text{Uniformity}, respectively. That is: 
$\lim_{\tau \to 0^{+}} \bL_{\text{CLIP}} + \bL_{m^{2}\text{-Mix}} \simeq -(\text{Alignment}+\text{Uniformity})$ 
\label{eq:prop} 
\end{proposition}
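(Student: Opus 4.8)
The plan is to handle the two summands separately and add the limits. For $\bL_{\text{CLIP}}$ I would invoke the standard observation (Wang et al.~\cite{wang2021understanding}) that an InfoNCE term $\frac1M\sum_i-\log\frac{\exp(s_{ii}/\tau)}{\sum_j\exp(s_{ij}/\tau)}$ equals $\frac1M\sum_i\bigl(-s_{ii}/\tau+\operatorname{logsumexp}_j(s_{ij}/\tau)\bigr)$, and that $\tau\operatorname{logsumexp}_j(s_{ij}/\tau)\to\max_j s_{ij}$ as $\tau\to0^{+}$. Since $j=i$ lies inside the sum, $\max_j s_{ij}\ge s_{ii}$, so $\tau\,C(I,T)\to\frac1M\sum_i\bigl(\max_{j\neq i}s_{ij}-s_{ii}\bigr)$, the zero-margin triplet loss. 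Using that CLIP embeddings are $L_{2}$-normalized, $\operatorname{sim}(I_i,T_j)=I_i\cdot T_j=1-\tfrac12\|I_i-T_j\|_2^2$, this rewrites as $\tfrac1{2M}\sum_i\bigl(\|I_i-T_i\|_2^2-\min_{j\neq i}\|I_i-T_j\|_2^2\bigr)$; symmetrizing over the two halves of $\bL_{\text{CLIP}}$ and replacing the empirical average by its expectation for large $M$ (law of large numbers) gives $-\text{Alignment}$ of Eq.~\eqref{eq:alignment}, up to the suppressed $1/\tau$ scaling and a positive constant that ``$\simeq$'' absorbs (the hinge present in ``zero-margin triplet loss'' makes this a correspondence of minimizers rather than of values).

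For $\bL_{m^{2}\text{-Mix}}$ the numerator is the same positive factor $\exp(I_i\cdot T_i/\tau)$, contributing the identical $-s_{ii}/\tau$ piece already charged to the alignment term; the new content is the denominator over the geodesically-mixed embeddings. Here I would use that geodesic Mixup keeps $m_\lambda(I_j,T_j)$ on the unit sphere --- the purpose of Eq.~\eqref{eq:gmmix} --- so the same identity gives $I_i\cdot m_\lambda(I_j,T_j)=1-\tfrac12\|I_i-m_\lambda(I_j,T_j)\|_2^2$, and then argue that for large $M$ and $\lambda\sim\operatorname{Beta}(\alpha,\alpha)$ the empirical average $\tfrac1{M-1}\sum_{j\neq i}\exp\!\bigl(I_i\cdot m_\lambda(I_j,T_j)/\tau\bigr)$ concentrates on $\E_{(x_j,y_j),\lambda}\bigl[\exp(I_i\cdot m_\lambda(I_j,T_j)/\tau)\bigr]$. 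Because the geodesic endpoints are $m_0(I_j,T_j)=T_j$ and $m_1(I_j,T_j)=I_j$, this expectation is a mixture containing the text marginal, the image marginal, and the interspace between them; isolating the $\lambda\to0$ end reproduces the cross-modal quantity $\E_{(x_i,y_j)}[\exp(-c\|f(x_i)-g(y_j)\|_2^2)]$ up to an additive constant and the positive scale $c=1/\tau$. By Wang et al.~\cite{wang2020understanding} this potential is minimized on $S^{d-1}$ exactly by the uniform distribution for every $c>0$, so the mismatch between $1/\tau$ and the constant $2$ in Eq.~\eqref{eq:uniformity} is immaterial; identifying the limiting denominator of $\bL_{m^{2}\text{-Mix}}$ with $-\text{Uniformity}$ and adding the two limits yields $\lim_{\tau\to0^{+}}\bL_{\text{CLIP}}+\bL_{m^{2}\text{-Mix}}\simeq-(\text{Alignment}+\text{Uniformity})$.

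The main obstacle is the uniformity identification. Read literally, a $\operatorname{logsumexp}$ at $\tau\to0^{+}$ collapses to a $\max$, which is again an alignment/triplet-type functional, not the soft $\log\E[\exp(\cdot)]$ defining Uniformity; the honest statement is therefore not ``the loss value converges to $-\text{Uniformity}$'' but ``minimizing the limiting $m^{2}$-Mix term is equivalent to maximizing Uniformity,'' in the sense in which Wang et al.~\cite{wang2020understanding,wang2021understanding} relate the InfoNCE denominator to the uniformity potential at a fixed temperature-like constant. I would accordingly phrase the conclusion at the level of minimizers --- the minimizer of the limiting objective is well-aligned and uniformly spread on the hypersphere --- and use the concentration argument together with the hardness guarantee of Theorem~\ref{eq:thm} to ensure the mixed negatives genuinely populate the region between the two modality clusters, so that repelling $I_i$ from them forces spreading over the whole sphere rather than merely re-separating the clusters. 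The remaining ingredients --- the unit-vector distance identity, the $\operatorname{logsumexp}$-to-$\max$ limit, and the law-of-large-numbers replacement of empirical averages by expectations for large $M$ --- are routine.
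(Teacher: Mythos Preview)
Your handling of $\bL_{\text{CLIP}}$ matches the paper's: the $\operatorname{logsumexp}$ collapses to a $\max$ as $\tau\to 0^{+}$, yielding the zero-margin triplet loss, and the unit-norm identity rewrites this as $-\text{Alignment}$ up to scale.

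For $\bL_{m^{2}\text{-Mix}}$ you take a genuinely different route. You let the $\operatorname{logsumexp}$ collapse to a $\max$ and then, correctly recognizing that this is again a triplet functional rather than the soft $\log\E\exp$ defining Uniformity, retreat to a minimizer-level statement supported by concentration over $\lambda$ and an ``isolation of the $\lambda\to 0$ endpoint.'' The paper does neither. Its key step is to invoke the hardness assumption $I_i\cdot m_\lambda(I_j,T_j)>I_i\cdot T_i$ for all $j\neq i$ (supported by Theorem~\ref{eq:thm} and the empirical similarity histograms) \emph{before} any max-collapse. Under this assumption the positive term in the denominator is dominated by every mixed-negative term and can be dropped, leaving $\tfrac{1}{M}\sum_i\log\sum_{j\neq i}\exp(I_i\cdot m_\lambda(I_j,T_j)/\tau)$ intact in its $\log\sum\exp$ form; this already has the structure of $-\text{Uniformity}$ (with mixed embeddings in place of $T_j$ and $1/\tau$ in place of the constant $2$). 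The paper never isolates the $\lambda\to 0$ endpoint and does not try to recover the cross-modal marginal of Eq.~\eqref{eq:uniformity}; it simply writes the limit as $-\text{Uniformity}(I,m_\lambda(I_j,T_j);\theta)$, i.e., uniformity between the images and the mixed samples themselves.

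Your diagnosis of the obstacle is sharper than the paper's own (which is informal, consistent with the ``$\simeq$''), and your minimizer-level reading is arguably the only way to make the claim rigorous. But the intended argument is more direct: use hardness to prevent the max-collapse in the first place, so the soft logsumexp survives and is identified with Uniformity at the value level. You should foreground the hardness assumption as that hinge rather than relegating it to a closing remark; the $\lambda\to 0$ detour is neither needed nor justified.
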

$m^{2}$-Mix \sty{brings} two advantages on multi-modal CL with theoretical grounds. Firstly, by generating sufficiently hard negatives, it incentivizes the model to enforce alignment more strongly whether there exists a modality gap or not. Besides, the uniformity is also explicitly increased as $\bL_{m^{2}\text{-Mix}}$ asymptotically converges to negative uniformity. Thus, \sty{$\bL_{\text{CLIP}}$} with $m^{2}$-Mix induces well-aligned and uniformly distributed embeddings, so it makes the model robustly works on diverse tasks. 
\begin{figure}
    \vspace{-1.5em}
    \centering
    \includegraphics[width=\linewidth]{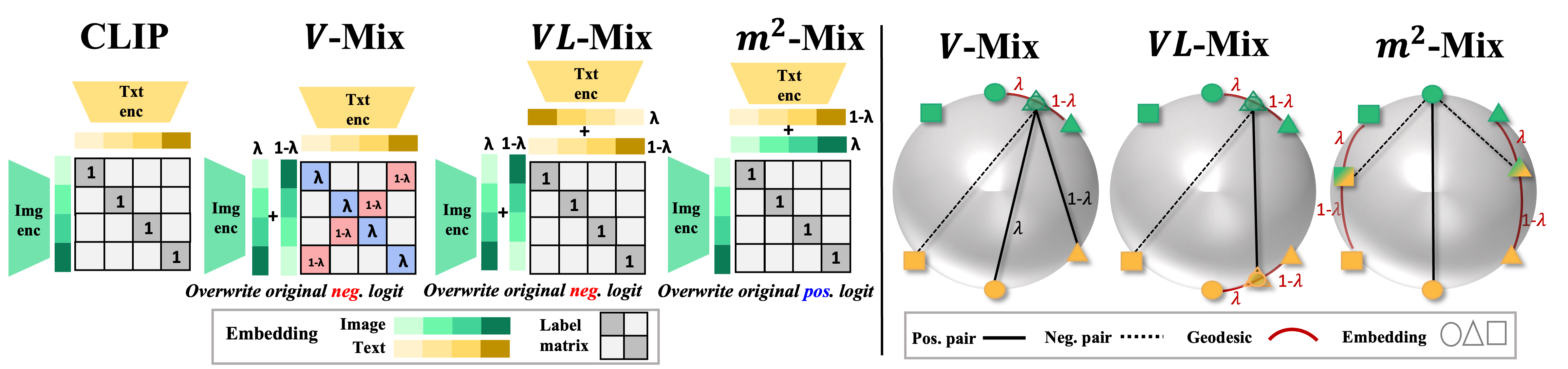}
    \vspace{-1.0em}
    \caption{Comparison among contrastive loss\sty{es}. CLIP enforces the pair-wise similarity between matched image-text embedding pairs. $m^{2}$-Mix generates hard negative samples via mixing two heterogeneous embeddings. We additionally propose uni-modal Mixups, $V$-Mix (and $L$-Mix) and $VL$-Mix, that augment the homogenous embeddings in multi-modal contrastive loss.}
    \label{fig:all_mix_structure}
    \vspace{-1.5em}
\end{figure}
\subsection{Uni-Modal Mixup for CLIP}
$m^{2}$-Mix generates hard negatives for CL to align and distribute embeddings uniformly. Meanwhile, Mixup is known to alleviate the overconfidence problem in uni-modal setups~\cite{thulasidasan2019mixup}. So, we further propose three uni-modal Mixups, Vision-Mix ($V$-Mix), Language-Mix ($L$-Mix), and Vision-Language Mix ($VL$-Mix) to enhance multi-modal CL. Fig. \ref{fig:all_mix_structure} shows the overall structures. 
\paragraph{$uni$-Mix} $V$-Mix and $L$-Mix interpolate the image and text embedding, respectively. To be specific, $V$-Mix mixes the embeddings of images in batch and a flipped (reversed) batch with a ratio $\lambda$. Then, $m_{\lambda}(I_{i},I_{i'})$ has information from $i$ and $i'=M-i$ indexed samples with $\lambda$ and $1-\lambda$ fraction, respectively. Thus, pseudo label for $(m_{\lambda}(I_{i},I_{i'}), T_i)$ pair is $\lambda$ while, that for $(m_{\lambda}(I_{i},I_{i'}), T_{i'})$ is 1-$\lambda$. 

\footnotesize
\begin{flalign}
    C_{V}(I,T) &= \frac{1}{M}\sum_{i=1}^{M}-\lambda\log\frac{\exp( m_{\lambda}(I_{i},I_{i'})\cdot T_i /\tau)}{\sum_{j=1}^{M}\exp(I_i\cdot T_j/\tau)} \nonumber -(1-\lambda)\log\frac{\exp(m_{\lambda}(I_{i},I_{i'}) \cdot T_{i'}/\tau)}{\sum_{j=1}^{M}\exp(I_i\cdot T_j/\tau)}
    \nonumber
\end{flalign} \label{eq:v_mix}
\normalsize
$L$-Mix has the same formula with $V$-Mix, except that it is applied to text-side. The combination loss term of $V$- and $L$-Mix is defined as \ctn{$\bL_{uni\text{-Mix}} = \frac{1}{2}(C_{V}(I,T)+C_{V}(T,I))+\frac{1}{2}(C_{L}(I,T)+C_{L}(T,I))$}
\paragraph{$VL$-Mix} For pair-wise similarity contrast, $V$-Mix and $L$-Mix only mix the image and text embedding, respectively. We additionally propose $VL$-Mix that mixes the image and text embedding simultaneously. Note that $m^{2}$-Mix mixes embeddings of image and text, while $VL$-Mix independently mixes them. Both $m_{\lambda}(I_{i},I_{i'})$ and $m_{\lambda}(T_{i},T_{i'})$ has $i$-th component and $i'$-th component with fraction $\lambda$ and $1-\lambda$ respectively, so the pseudo label for $(m_{\lambda}(I_{i},I_{i'})$, $m_{\lambda}(T_{i},T_{i'}))$ is 1. \ctn{Here, similarities between negative pairs are retain with that of original negatives likewise $uni$-Mix.}

\footnotesize
\begin{flalign}
C_{VL}(I,T) = \frac{1}{M}\sum_{i=1}^{M}-\log\frac{\exp(m_{\lambda}(I_{i},I_{i'})\cdot m_{\lambda}(T_{i},T_{i'})/\tau)}{\sum_{j=1}^{M}\exp(I_i\cdot T_j/\tau)} \nonumber \;\; \bL_{VL\text{-Mix}} = \frac{1}{2}(C_{VL} (I,T)+C_{VL}(T,I)) \nonumber
\end{flalign}
\normalsize
\paragraph{$m^{3}$-Mix} We name the combination of $m^{2}$-Mix with uni-Mix and $VL$-mix as $m^{3}$-Mix, multiple multi-modal Mixup. \sty{Complete objective function is denoted as (weights for each term are omitted):}
\begin{eqnarray}
    \bL_{m^{3}\text{-Mix}}=\bL_{CLIP} + \bL_{m^{2}\text{-Mix}}\nonumber + \bL_{uni\text{-Mix}} + \bL_{VL\text{-Mix}} \nonumber
\end{eqnarray}
\section{Results}
\label{sec:result}
\paragraph{Settings} Unless otherwise stated, we adopt CLIP ViT-B/32 as our backbone model. We consider the following methods as our baselines in retrieval and embedding arithmetic tasks: zero-shot inference (ZS) of CLIP, embedding shift (ES)~\cite{liang2022mind}, naive fine-tuning (FT), and its increased $\tau$ variants, recent Mixup-augmented uni-modal CL methods $i$-Mix~\cite{lee2021imix} and Un-Mix~\cite{shen2022unmix}. \sty{We put task-specific setups on each section and Sec. \ref{appendix:exp_setting} of Supplementary Material (SM). Further details, hyperparameters selection, pseudo code, and additional results are put in Sec. \ref{appendix:exp_setting}, \ref{appendix:pseudo}, and \ref{appendix:results} of SM, respectively.}

\subsection{Generated Samples by $m^{2}$-Mix}
\label{sec:results_sample}
\begin{figure}[!bhtp]
    \vspace{-0.3em}
     \centering
     \subfigure[Retrieved images]{\includegraphics[width=0.48\linewidth]{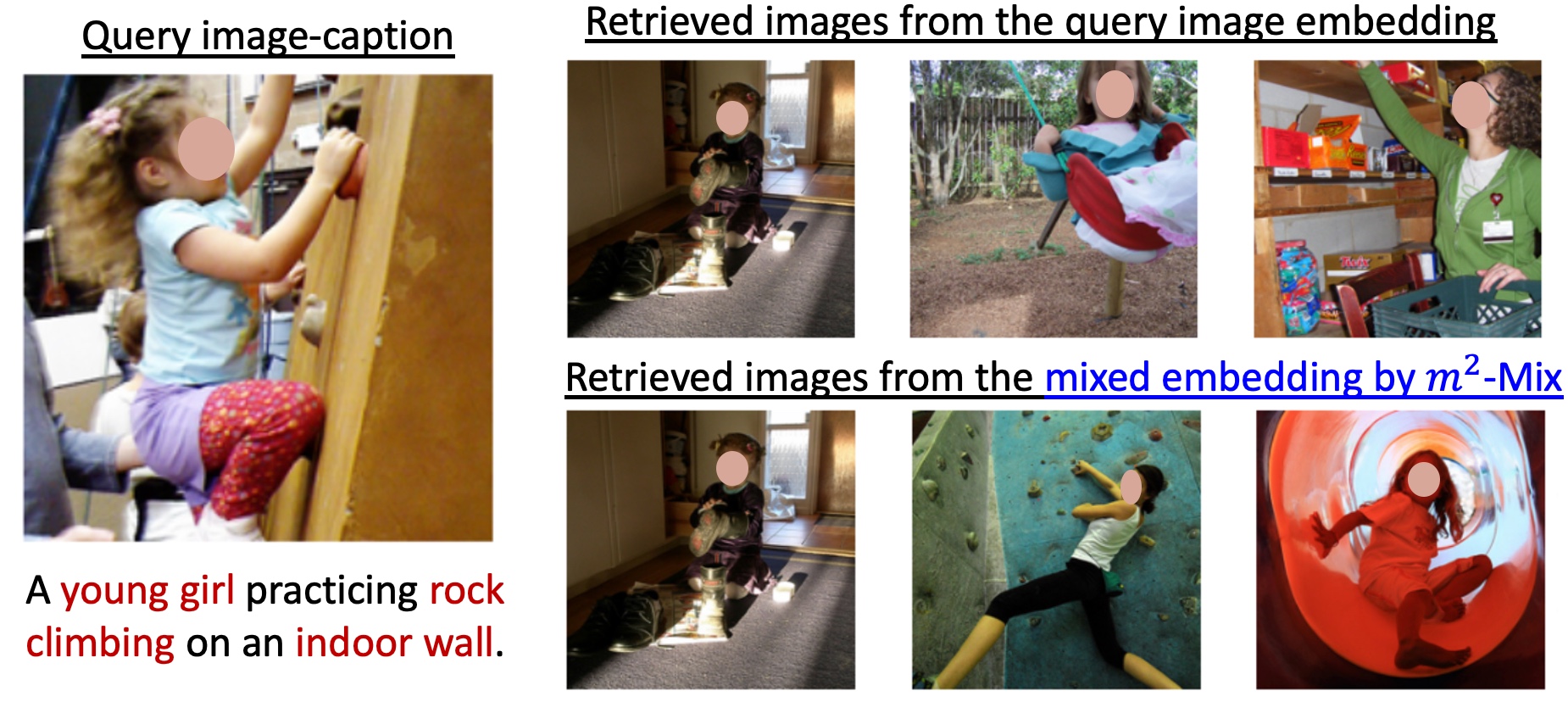}} \label{fig:m2_mix_retrieval}
     \subfigure[Initial epoch]{\includegraphics[width=0.225\linewidth]{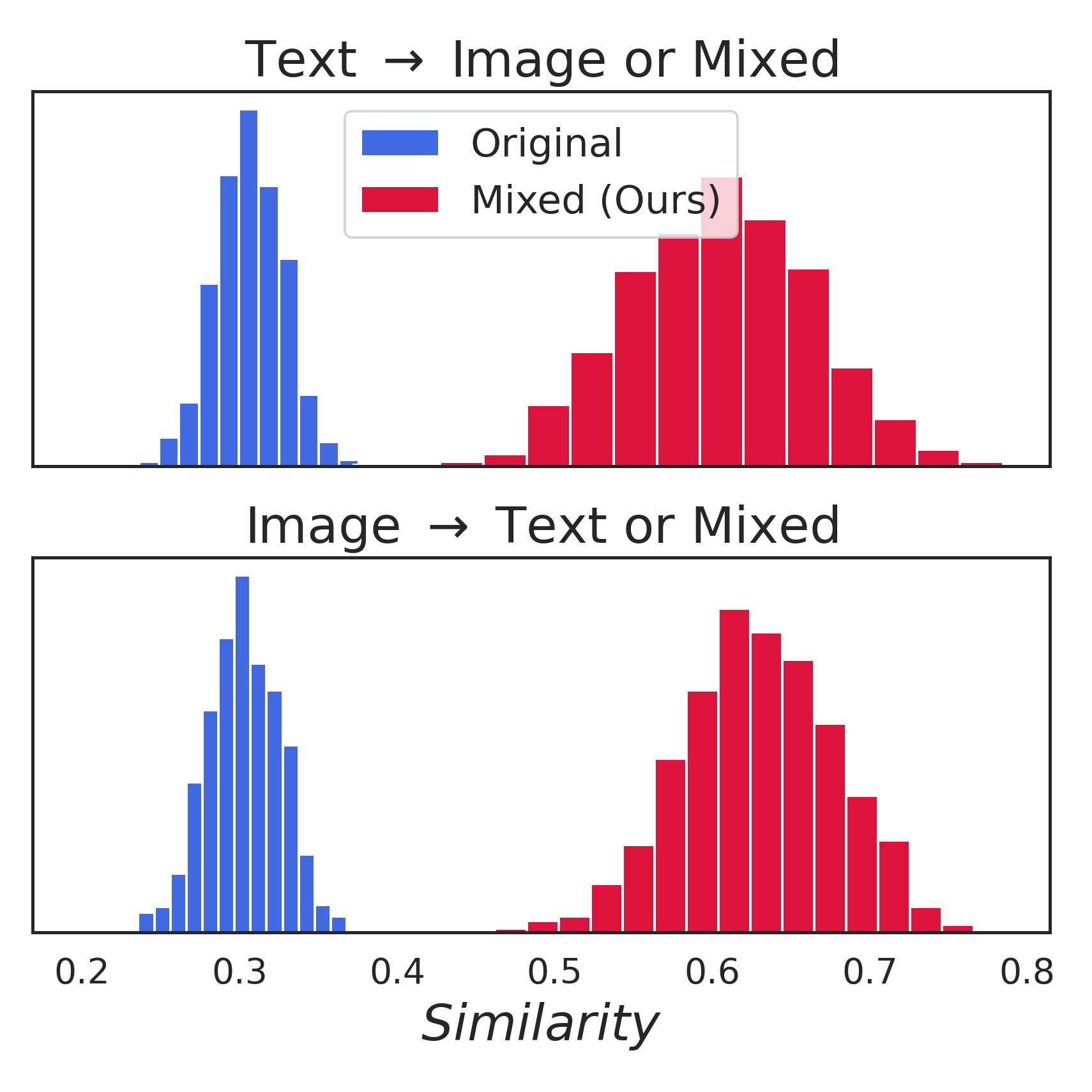}} \label{fig:hardness_hist_zs}
     \subfigure[Last epoch]{\includegraphics[width=0.225\linewidth]{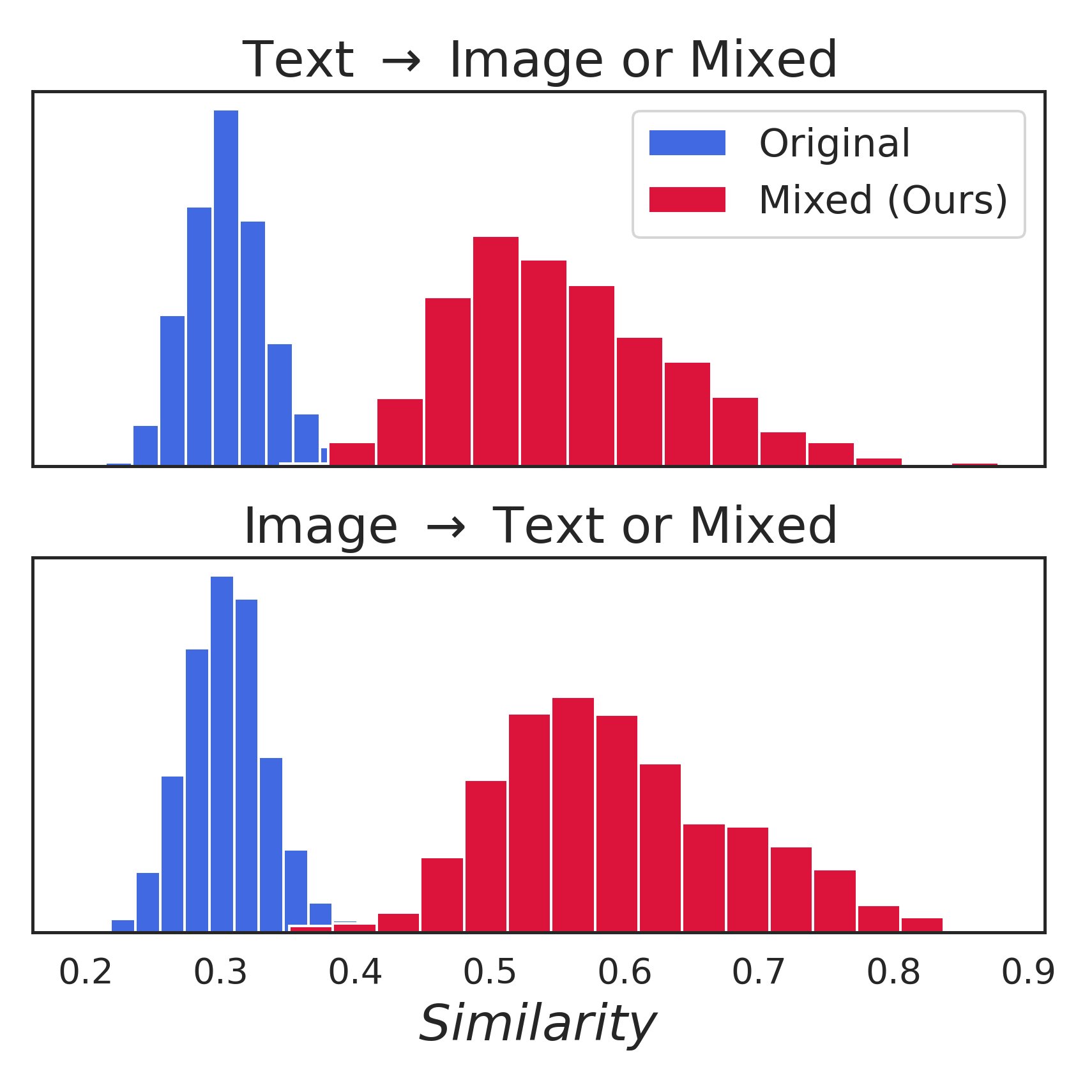}} \label{fig:hardness_hist_ft1}
     \vspace{-0.3em}
    \caption{On Flickr30k, (a) top-3 retrieved images by image embedding (Top) and $m^{2}$-Mixed (Bottom) from a source instance (Left). $m^{2}$-Mix generates an embedding that contains features from both modalities \ctn{(young girl, climbing, and indoor wall), partly lacking} in the image embedding. (b) and (c) denote cosine similarities between given test instances and its top-1 nearest negative embedding during training epochs. $m^{2}$-Mix makes negatives that are highly similar to given instances.
    }
    \vspace{-0.5em}
\label{fig:m2mix_analysis}
\end{figure}
To understand the properties of generated embedding by $m^{2}$-Mix, we explore the mixed embedding from $m^{2}$-Mix. Due to the non-trivial visualization of the embedding itself, we retrieve images that have the most similar embedding with a mixed one. In Fig. \ref{fig:m2mix_analysis} (a), the embedding generated by $m^{2}$-Mix has both features from image and text that lack in original image embedding, e.g., the second and third images from $m^{2}$-Mix have \ctn{rock climbing and indoor wall} represented in the text. Besides, similarity histograms in Fig. \ref{fig:m2mix_analysis} (b) and (c), show that $m^{2}$-Mix consistently produces harder negatives than the original non-mixed counterparts from the initial to last training epochs. 

\subsection{Cross-Modal Retrieval with CLIP}
\label{sec:results_retrieval}
First, we validate our method on image-text retrieval, a representative vision-language task, on Flickr30k~\cite{plummer2015flickr30k} and MS COCO~\cite{lin2014microsoft}. All methods are trained over 9 epochs with Adam optimizer (details in SM). Tab. \ref{tab:retrieval_multimodal} denotes top-1/-5 recall of retrieval. Our $m^{3}$-Mix increases overall performance, while the standard fine-tuning approaches and Mixup-baselines~\cite{lee2021imix, shen2022unmix} have limited performance gain. Corresponding to previous works~\cite{wang2021understanding, liang2022mind}, we also found that properly increasing temperature ($\tau$) in contrastive loss is quite beneficial at improving performance for both FT and $m^{3}$-Mix.

\begin{minipage}[t!]{\textwidth}
\begin{minipage}[]{0.57\textwidth}
\centering
\captionof{table}{Image to text (i$\rightarrow$t) and text to image retrieval (t$\rightarrow$i) retrieval results (top-1/-5 Recall;R1, R5). ZS and FT denote pre-trained and fine-tuned CLIP, respectively.}
\label{tab:retrieval_multimodal}
\tiny
\begin{tabular}{@{}lcccc|cccc@{}}
\toprule
\multirow{3}{*}{} & \multicolumn{4}{c|}{Flickr30k} & \multicolumn{4}{c}{MS COCO} \\
 & \multicolumn{2}{c}{i$\rightarrow$t} & \multicolumn{2}{c|}{t$\rightarrow$i} & \multicolumn{2}{c}{i$\rightarrow$t} & \multicolumn{2}{c}{t$\rightarrow$i} \\
 & R1 & R5 & R1 & R5 & R1 & R5 & R1 & R5 \\ \midrule
ZS & 71.1 & 90.4 & 68.5 & 88.9 & 31.9 & 56.9 & 28.5 & 53.1 \\
ES~\cite{liang2022mind} & 71.8 & 90.0 & 68.5 & 88.9 & 31.9 & 56.9 & 28.7 & 53.0 \\
\midrule
FT & 81.2 & 95.4 & 80.7 & 95.8 & 36.7 & 63.6 & 36.9 & 63.9 \\
FT ($\tau=0.05$) & 82.4 & 95.1 & 82.1 & 95.7 & 40.2 & 68.2 & 41.6 & \textbf{69.9} \\
FT ($\tau=0.10$) & 75.7 & 93.9 & 78.0 & 92.9 & 34.2 & 62.7 & 36.7 & 64.2 \\
\midrule
$i$-Mix~\cite{lee2021imix} & 72.3 & 91.7 & 69.0 & 91.1 & 34.0 & 63.0 & 34.6 & 62.2 \\
Un-Mix~\cite{shen2022unmix} & 78.5 & 95.4 & 74.1 & 91.8 & 38.8 & 66.2 & 33.4 & 61.0 \\
\midrule
$m^{3}$-Mix & 82.3 & \textbf{95.9} & 82.7 & \textbf{96.0} & \textbf{41.0} & \textbf{68.3} & 39.9 & 67.9 \\
$m^{3}$-Mix ($\tau=0.05$) & \textbf{82.7} & 95.7 & \textbf{82.8} & 95.5 & 40.4 & 67.9 & \textbf{42.0} & 68.8 \\
\bottomrule
\end{tabular}
\end{minipage}
\hfill
\begin{minipage}[]{0.40\textwidth}
\centering
\captionof{table}{Retrieval with disjoint uni-modal models. ZS is a naive 
 combination of two models without joint-tuning.}
\tiny
\begin{tabular}{@{}lcccc@{}}
\toprule
\multirow{3}{*}{} & \multicolumn{4}{c}{Flickr30k} \\
& \multicolumn{2}{c}{i $\rightarrow$ t} & \multicolumn{2}{c}{t $\rightarrow$ i} \\
 & R1 & R5 & R1 & R5 \\ \midrule
ZS & 0.1 & 0.4 & 0.1 & 0.2\\
ES~\cite{liang2022mind} & 0.1 & 0.5 & 0.2 & 0.2 \\ \midrule
FT & 28.7 & 61.7 & 26.7 & 59.4  \\
FT ($\tau=0.05$) & 31.5 & 64.2 & 29.2 & 61.4  \\
FT ($\tau=0.10$) & 30.0 & 62.7 & 30.1 & 60.6  \\ \midrule
$i$-Mix~\cite{lee2021imix} & 27.6 &  60.3 & 27.1 & 60.7 \\
Un-Mix~\cite{shen2022unmix} & 31.5 & 64.3 & 29.2 & 61.2 \\ \midrule
$m^{3}$-Mix & 31.9 & 62.6 & 30.3 & 61.0 \\
$m^{3}$-Mix ($\tau=0.05$) & \textbf{32.5} & \textbf{64.7} & \textbf{30.4} & \textbf{63.4} \\ \bottomrule
\end{tabular} \label{table:retrieval_unimodal}
\end{minipage}
\end{minipage}

\begin{wraptable}{h!}{0.40\linewidth}
\centering
\vspace{-1em}
\caption{Calibration on Flickr30k.} 
\label{tab:calibration}
\tiny
\begin{tabular}{lc|cccc}
\toprule
Metric & Task & ZS & FT & $m^{3}$-Mix \\ \midrule
\multirow{2}{*}{ECE ($\downarrow$)} & i $\rightarrow$ t & 1.90 & 2.26 &  \textbf{1.54} \\
 & t $\rightarrow$ i & 1.88 & 2.00 & \textbf{1.58} \\ \bottomrule
\end{tabular} \vspace{-1em}
%\end{adjustbox}
\end{wraptable}
Besides, we observed the improved uniformity and alignment by $m^{3}$-Mix (Fig. \ref{fig:dosnes_embedding}) not only enhances the Recall of retrievals but also contributes to the calibration~\cite{guo2017calibration}. The left side of Fig. \ref{fig:calibration} denotes the reliability diagrams with calibration errors of the text-to-image retrieval R1 score on Flickr30K. While the naively fine-tuned CLIP has poor calibration, fine-tuning with $m^{3}$-Mix alleviates the overconfidence issue somewhat and results in a better calibration. This is further confirmed by Tab. \ref{tab:calibration}, in which our $m^3$-Mix significantly improves the Expected Calibration Error (ECE) of CLIP. Meanwhile, it is known that the ECE value can be improved by adjusting the temperature $\tau$, i.e., temperature scaling~\cite{guo2017calibration}. Therefore, we provide the sensitive analysis on varying $\tau$. In Fig. \ref{fig:calibration} right side, our method shows relatively robust ECE under varying $\tau$, implying that our multi-modal Mixup-based CL induces the well-calibrated multi-modal model, which is crucial for reliable AI applications.
\begin{figure}[htp!]
\vspace{-0.5em}
  \centering
    \includegraphics[width=0.93\linewidth]{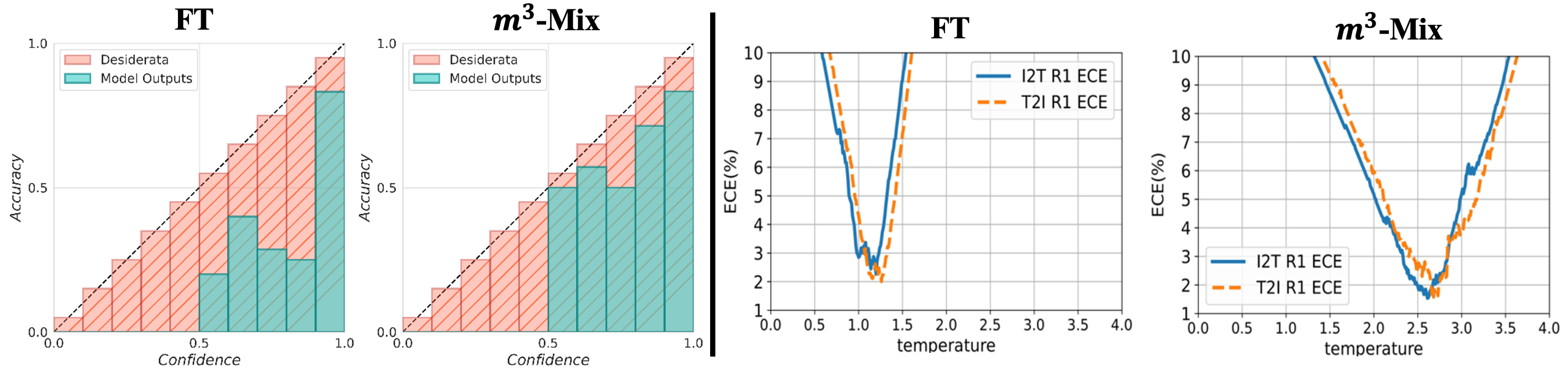}
    \caption{Reliability diagram (left) and ECE under varying temperature $\tau$ (right) on Flickr30K image-to-text retrieval. Our method shows (1) a better reliability diagram (close to $y=x$), (2) achieves a lower minimum ECE value, and (3) more stable across varying $\tau$ than naive fine-tuning. Thus, representation learning by $m^3$-Mix robustly induces a well-calibrated multi-modal model.}
    \label{fig:calibration}
    \vspace{-1em}
\end{figure}
\subsection{Cross-Modal Retrieval with Uni-Modal Pre-Trained Models} \label{sec:ret_uni}
Sometimes, the high-quality annotations for multi-modal datasets are expensive, and there are cases when plenty of paired multi-modal data is unavailable. Then, it is crucial to exploit the uni-modal pre-trained models for learning multi-modal embedding space~\cite{Zhai_2022_CVPR}. To this end, we validate our $m^{3}$-Mix on the fine-tuning of disjointly pre-trained uni-modal models. Specifically, we jointly fine-tune the pre-trained BERT~\cite{kenton2019bert} and ResNet-50~\cite{he2016deep} with a contrastive loss on Flickr30k (in Table \ref{table:retrieval_unimodal}). Among candidates, $m^{3}$-Mix with higher $\tau$ consistently achieves the highest performance so that it can be adopted as an effective joint tuning method for independently pre-trained uni-modal models.

\subsection{Few-Shot Adaptation and Robustness on Distribution Shift}
\begin{minipage}{\textwidth}
\begin{minipage}[]{0.36\textwidth}
\centering
\captionof{table}{Few-shot adaptation under general setting.}
\tiny
\begin{tabular}{lccc|c}
\toprule
\multirow{2}{*}{Method} & \multicolumn{4}{c}{Dataset} \\ & Pets & SVHN & CLEVR & Avg. \\ \midrule
ZS                     & 87.49           & 13.63        & 20.70           & 40.61       \\
FT                     & 89.37           & 45.00        & 53.49           & 62.62       \\
FT w/ $V$-Mix                & 89.45           & 44.61        & 53.93           & 62.66       \\
FT w/ $L$-Mix                & 89.43           & 48.42        & 53.91           & 63.92       \\
FT w/ $VL$-Mix               & 89.56           & 45.22        & 53.75           & 62.84       \\
FT w/ $m^2$-Mix              & 90.05           & 46.24        & 53.60           & 63.29       \\
$m^3$-Mix              & 90.16           & 54.84        & 53.85           & 66.28       \\
$m^3$-Mix ($\tau=0.05$)& 90.49           & 60.90        & 53.95           & 68.45       \\ \midrule
WiSE-FT~\cite{wortsman2021robust}  & 91.80           & 35.04        & 41.93           & 56.25       \\
WiSE-FT w/ $m^3$-Mix   & \textbf{92.51}           & 58.55        & 47.11           & 66.06       \\
LP-FT~\cite{kumar2022fine}   & 89.92           & 44.91        & 53.62           & 62.82       \\
LP-FT w/ $m^3$-Mix     & 91.03    & \textbf{64.24}  & \textbf{55.20}     & \textbf{70.16}  \\
MaPLe~\cite{khattak2022maple}      & 90.87        & 47.62        & 43.05      & 60.51       \\
MaPLe w/ $m^3$-Mix     & 91.14           & 52.72        & 45.20           & 63.02       \\ \bottomrule
\end{tabular}
\label{table:imageclf_transfer}
\end{minipage}
\hfill
\begin{minipage}[]{0.54\textwidth}
    \centering
    \captionof{table}{Few-shot (ImageNet; IN) and zero-shot evaluation under distribution shift (-V2, -A, -R, -S).}
    \tiny
    \begin{tabular}{lc|cccc|c}
    \toprule
        \multirow{2}{*}{Method} & \multicolumn{5}{c}{Dataset} &  \\ 
             & IN    & IN-V2 & IN-A  & IN-R  & IN-S  & Avg.  \\ \midrule
          ZS & 62.06 & 54.80 & 29.63 & 66.02 & 40.82 & 50.67  \\ 
        FT & 65.44 & 55.35 & 20.07 & 58.16 & 34.50 & 46.70  \\
        FT w/ $V$-Mix & 66.00 & 56.19 & 20.85 & 60.50 & 34.97 & 47.70  \\ 
        FT w/ $L$-Mix & 65.96 & 55.95 & 20.57 & 60.54 & 35.25 & 47.65  \\ 
        FT w/ $VL$-Mix & 66.24 & 56.70 & 21.36 & 61.07 & 35.11 & 48.10  \\ 
        FT w/ $m^2$-Mix  & 67.04 & 57.39 & 20.05 & 59.28 & 35.31 & 47.81  \\
        $m^3$-Mix & 67.08 & 57.55 & 20.80 & 60.96 & 35.86 & 48.45  \\
        $m^3$-Mix ($\tau=0.05$) & 68.40 & 58.51 & 22.17 & 62.28 & 37.62 & 49.80  \\ \midrule
        WiSE-FT~\cite{wortsman2021robust} & 69.00 & 59.66 & 28.01 & 64.84 & 41.05 & 52.51  \\ 
        WiSE-FT w/ $m^3$-Mix & \textbf{69.65} & \textbf{60.71} & 29.16 & 66.75 & 42.19 & \textbf{53.69}  \\ 
        LP-FT~\cite{kumar2022fine} & 68.22 & 58.40 & 25.57 & 63.36 & 38.04 & 50.72  \\ 
        LP-FT w/ $m^3$-Mix & 68.62 & 59.17 & 25.85 & 65.14 & 38.78 & 51.51  \\ 
        MaPLe~\cite{khattak2022maple} & 65.59 & 58.44 & 32.49 & 68.13 & 42.53 & 53.44  \\
        MaPLe w/ $m^3$-Mix & 65.76 & 58.16 & \textbf{32.52} & \textbf{68.20} & \textbf{42.67} & 53.46 \\ \bottomrule
    \end{tabular}
    \label{table:imageclf}    
\end{minipage}
\end{minipage}

Next, we evaluate our methods on few-shot image classification under general (in Tab. \ref{table:imageclf_transfer}) and distribution shift settings (in Tab. \ref{table:imageclf} and \ref{tab:abla_linear_tau}). We consider OxfordPets~\cite{6248092}, SVHN~\cite{37648}, and CLEVR~\cite{johnson2017clevr} for the general setting\footnote{Results of CLIP ViT-B/16 on other transfer learning benchmark datasets are provided in Sec. \ref{appendix:results} of SM.} and ImageNet-1k, ImageNetV2~\cite{recht2019imagenet}, ImageNet-A~\cite{hendrycks2021natural}, ImageNet-R~\cite{hendrycks2021many}, and ImageNet-Sketch~\cite{wang2019learning} for distribution shift setting. Unlike MS COCO and Flickr30K, These datasets provide class name labels only and do not have captions corresponding to each image. To make CL methods amendable for this setting, we adopt a common prompt \texttt{'a photo of {classname}'} that wraps the class name with a short context and use this as captions of images. Following~\cite{zhou2022conditional, khattak2022maple}, we perform the tasks under a few-shot evaluation protocol: 16-shot training samples per class and inference on the entire test set.
\begin{wraptable}{!htp}{0.34\textwidth}
\centering
\vspace{-1.3em}
\caption{Ablation study on Mixup.}
\scriptsize
\begin{tabular}{@{}l|cc@{}}
\toprule
\multirow{2}{*}{Temperature ($\tau$)} & \multicolumn{2}{c}{$m^3$-Mix type} \\
 & linear & geodesic \\ \midrule
0.01 & 48.36 & \textbf{48.45} \\
0.05 & 48.48 & \textbf{49.80} \\
0.10 & 45.20 & \textbf{46.41} \\ \bottomrule
\end{tabular}
\label{tab:abla_linear_tau}
\vspace{-1em}
\end{wraptable}
As baselines, we first consider zero-shot CLIP (ZS) and construct the contrastive loss adoption of vanilla fine-tuning (FT). Then, we showcase our methods with exhaustive ablation ($V$-, $L$-, $VL$-, and $m^{2}$-Mix) as well as our complete objective $m^{3}$-Mix with its high-temperature variant. To further compare our approach with state-of-the-art (SOTA) fine-tuning methods, we consider MaPLe~\cite{khattak2022maple} that optimizes the continuous prompts inside the text and image encoders of CLIP, and the contrastive loss extended version of uni-modal fine-tune methods: LP-FT~\cite{kumar2022fine} and WiSE-FT~\cite{wortsman2021robust}.

In both general and distribution shift settings, $m^2$-Mix and uni-modal Mixups ($V$-, $L$-, $VL$-) contribute to boost the few-/zero-shot classification performance. After integrating them, $m^3$-Mix and its high-temperature variant give significant performance improvement, implying $m^3$-Mix is an effective fine-tuning method that covers challenge generalization setup. Moreover, when $m^3$-Mix combined with SOTA fine-tuning methods~\cite{wortsman2021robust, kumar2022fine, khattak2022maple}, it consistently brings performance. Therefore, $m^3$-Mix is a flexible plug-in method that can be collaborated with many algorithms.
Besides, in Tab. \ref{tab:abla_linear_tau} (mean Acc. of ImageNet variants), we show that geodesic Mixup achieves superior results than linear Mixup with manual $L_{2}$-normalization. Thus, based on its analytic property, geodesic Mixup is more suitable than linear Mixup under frameworks that learn representation on a hypersphere, as in modern CLs.

\subsection{Robustness on Modality Missing}
\label{sec:results_gmc}
In this section, we study whether $m^2$-Mix can help the multi-modal representation learning for video recognition (CMU-MOSEI~\cite{bagher-zadeh-etal-2018-multimodal}) under modality missing. Recently, Geometric Multi-modal Contrastive Learning (GMC)~\cite{poklukar2022gmc} achieved competitive results on CMU-MOSEI.

However, GMC only considers the uni-to-joint modal relationship~\cite{poklukar2022gmc}, while our $m^2$-Mix explicitly generates the mixture of bi-modal semantics so that it can additionally consider the bi-to-joint modal relation. 
From this, we hypothesize that $m^2$-Mix can further improve the robustness and informativeness of the representation. For evaluation, we add the $m^2$-Mix on top of GMC.

\begin{wraptable}{t!}{0.33\linewidth}
\vspace{-1em}
  \centering
    \includegraphics[width=0.30\textwidth]{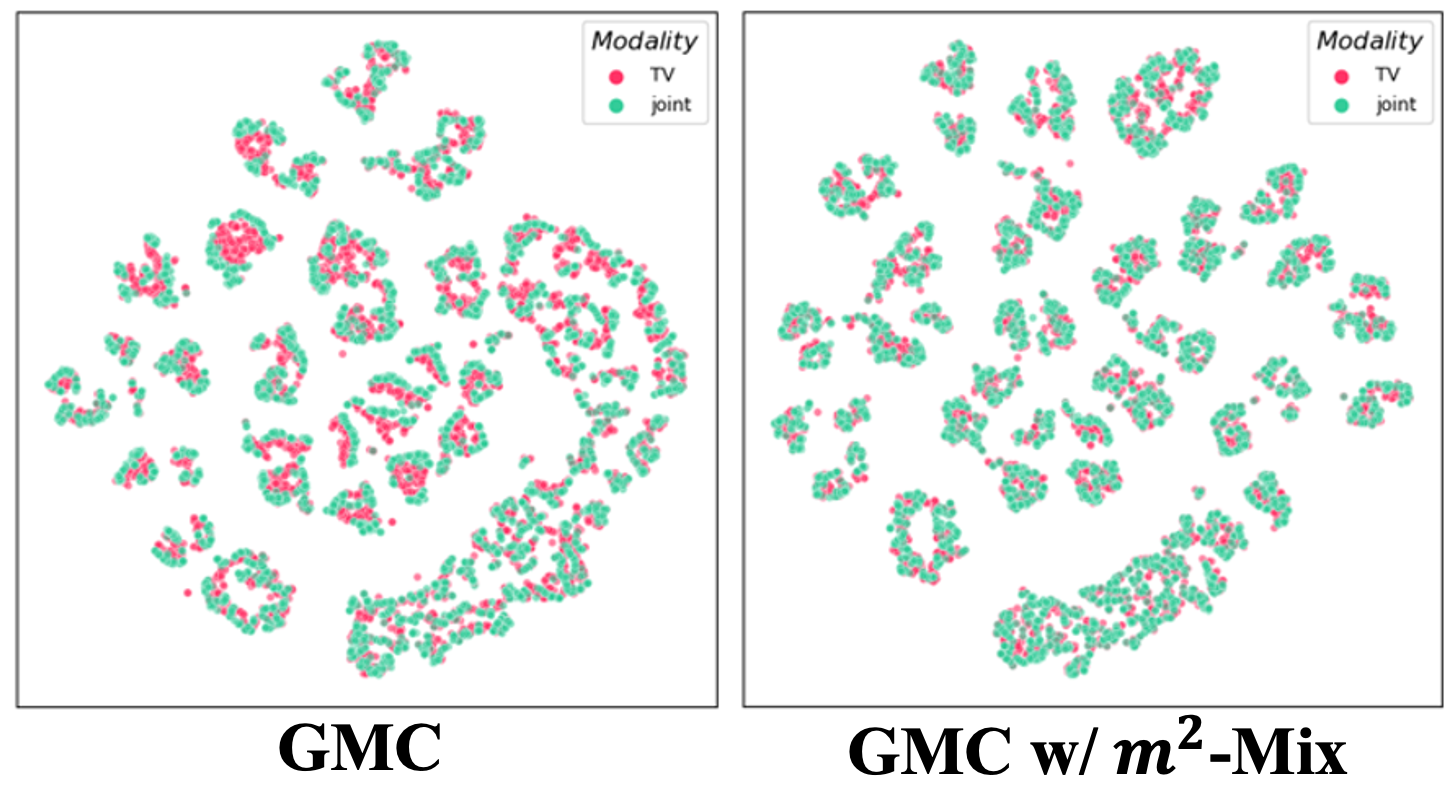}
    \captionof{figure}{t-SNE~\cite{van2008visualizing} on CMU-MOSEI with partial information. Each color denotes embeddings of partial and joint modality.}
    \label{fig:tsne_gmc}
    \vspace{-1em}
\end{wraptable}
Different from the CLIP fine-tuning cases, we use the multi-modal mixed representation as both positive and negative pairs with the target joint representation because our goal in this task is to align the embedding between the joint and other modalities.
As shown in Tab. \ref{tab:results_gmc_acc}, $m^2$-Mix coherently improves the performance of GMC in terms of accuracy, alignment, and uniformity. While GMC strongly aligns the embedding of partial and joint modality, the alignment is further enhanced by the aid of $m^2$-Mix (also confirmed in Fig. \ref{fig:tsne_gmc}), which results in superior performance when only partial information is given in test-time (modality missing). These results justify the use of $m^2$-Mix for robust learning under modality-incomplete scenarios.

\begin{table*}[thpb!]
\centering
\caption{Accuracy (acc.)($\uparrow$), alignment (align.)($\uparrow$), and uniformity (unif.)($\uparrow$) of multi-modal learning methods on CMU-MOSEI under complete and partial modalities. Averaged performance of five runs.}
\label{tab:results_gmc_acc}
\begin{adjustbox}{width=\textwidth}
\begin{tabular}{@{}l*{19}{c|}*{3}{c}@{}}
\toprule
 \multirow{2}{*}{} & \multicolumn{20}{c}{Test-time Observed Modalities} \\ \cmidrule(l){1-21} 
 & \multicolumn{2}{c|}{Full (T+V+A)} & \multicolumn{3}{c|}{T} & \multicolumn{3}{c|}{V} & \multicolumn{3}{c|}{A} & \multicolumn{3}{c|}{T+V} & \multicolumn{3}{c|}{T+A} & \multicolumn{3}{c}{V+A} \\ \cmidrule(l){2-21}
 & acc. & unif. & acc. & align. & unif.& acc. & align. & unif.& acc. & align. & unif.& acc. & align. & unif.& acc. & align. & unif.& acc. & align. & unif. \\ \cmidrule(l){1-21}
 
MulT~\cite{tsai2019multimodal} & 80.5 & 0.99 & 60.0 & - & 1.03 & 53.9 & - & 2.07 & 52.7 & - & 0.62 & 57.8 & - & 1.27 & 58.8 & - & 0.77 & 54.6 & - & 1.36\\
GMC~\cite{poklukar2022gmc} & 80.1 & 3.06 & 78.5 & 0.20 & 3.03 & \textbf{64.7} & 0.17 & 3.01 & 66.0 & 0.09 & 3.03 & 77.0 & 0.07 & 2.94 & 77.4 & 0.08 & 3.00 & 67.3 & 0.05 & 2.98 \\
GMC+$m^{2}$-Mix & \textbf{80.5} & \textbf{3.18} & \textbf{78.9} & \textbf{0.23} & \textbf{3.17} & 64.2 & \textbf{0.19} & \textbf{3.15} & \textbf{66.2} & \textbf{0.12} & \textbf{3.15} & \textbf{77.8} & \textbf{0.08} & \textbf{3.08} & \textbf{77.9} & \textbf{0.09} & \textbf{3.08} & \textbf{67.4} & \textbf{0.06} & \textbf{3.10} \\ \bottomrule
\end{tabular}
\end{adjustbox}
\end{table*}

\subsection{Multi-Modal Embedding Arithmetic}
\label{sec:results_simat}
We expect that well-learned multi-modal embedding represents the structural relationship between instances like word vectors~\cite{ethayarajh2018towards}. For validation, we evaluate the learned embeddings on SIMAT~\cite{couairon2021embedding}. SIMAT evaluates text-driven image representation by retrieving a new image with the highest similarity to the latent vector $x$, which is transformed by text \textit{delta vectors} when we change the word in an original text, i.e., formulated as: $x = I_{original} + \lambda\cdot (T_{new}-T_{original})$.
Here, $I$ and $T$ are image and text embedding vectors, and $\lambda$ is a hyper-parameter about the strength of transformation. Table \ref{tab:simat} presents the quantitative scores across methods after fine-tuning on Flickr30k and MS COCO, and evaluated on SIMAT. Learned representation from $m^{3}$-Mix shows stable scores on both a multi-modal model and jointly fine-tuned uni-modal models, which is further confirmed by qualitative experiments (Fig. \ref{fig:simat_vis}). These support that $m^{3}$-Mix can be adopted as a delicate fine-tuner when embedding geometric structure and arithmetic property should be considered, e.g., controllable generation~\cite{li2023gligen}.

\begin{minipage}{\textwidth}
\begin{minipage}[hpt]{0.46\textwidth}
\centering
\captionof{table}{$m^{3}$-Mix shows the stable SIMAT Score ($\uparrow$) on CLIP and joint fine-tuning of uni-modal pre-trained models.}
\label{tab:simat}
\tiny
\begin{tabular}{lccc}
\toprule
& CLIP & CLIP & BERT + RN50                                                   \\
& (MS COCO) & (Flickr30k) & (Flickr30k)                                            \\ \hline
ZS  & 34.5           & 34.4             & 6.1                                      \\ 
ES~\cite{liang2022mind} & 34.6           & 34.5             & 6.2                                      \\ \hline
FT  & 42.3           & \textbf{40.8}             & 15.4                                     \\
$i$-Mix~\cite{lee2021imix} & 37.3           & 40.0             & 12.9                         \\
Un-Mix~\cite{shen2022unmix} & 42.9           & 38.5             & 15.8                         \\ \hline
$m^{3}$-Mix     & \textbf{44.4}  & 38.9             & \textbf{19.0}                \\ \bottomrule
\end{tabular}
\end{minipage}
\begin{minipage}[hpt]{0.48\textwidth}
  \centering
    \includegraphics[width=\textwidth]{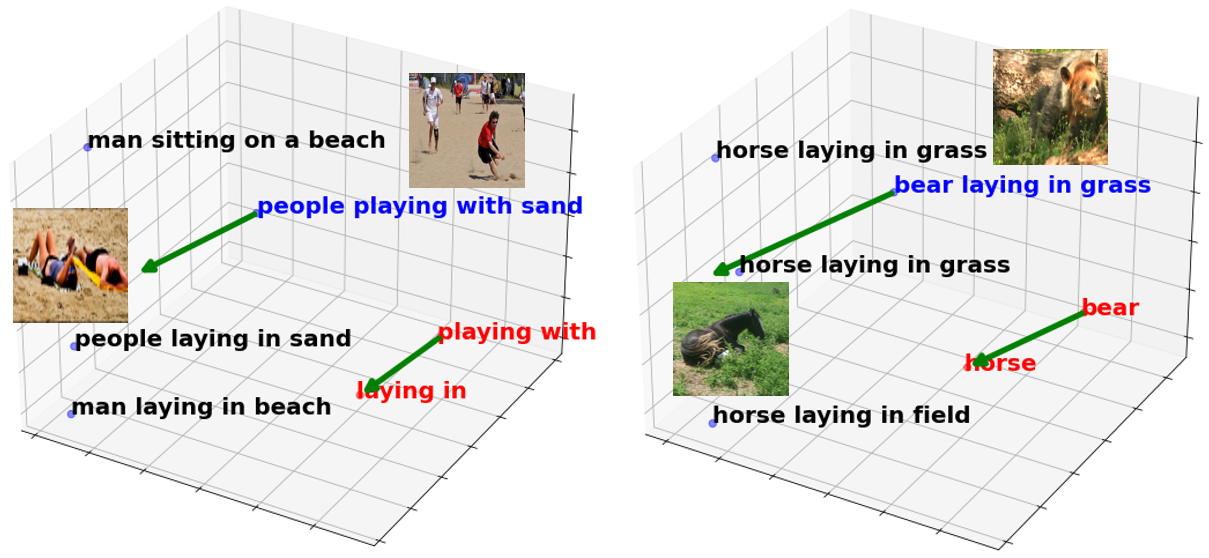}
    \captionof{figure}{Embeddings of $m^3$-Mix on SIMAT source-target texts (reds) and images (source and top-1 retrieved). Black texts correspond to top-3 retrieved images from a source image-caption.} \label{fig:simat_vis}
\end{minipage}
\end{minipage}

\subsection{Multi-Modal Mixup on a State-of-the-Art Vision-Language Model}
\ctn{In this section, we investigate whether our multi-modal Mixup can also be beneficial for improving other recent large-scale vision-language models beyond CLIP. For this, we adopt Contrastive Captioner (CoCa)~\cite{yu2022coca} ViT-L/14 configuration that pre-trained on LAION-2B~\cite{schuhmannlaion} from \texttt{OpenCLIP} library as our target backbone model. We consider three learning objectives for CoCa fine-tuning: (1) autoregressive captioning loss (Cap), (2) contrastive loss and captioning loss (CL $+$ Cap), and (3) contrastive loss, $\bL_{m^{2}\text{-Mix}}$, and captioning loss (CL w/ $\bL_{m^{2}\text{-Mix}} +$ Cap). For all three methods, we train the model on MS COCO over one epoch with \texttt{OpenCLIP}-provided hyperparameter configuration. After that, we evaluate each model for image captioning (Tab. \ref{tab:coca_captioning}) on MS COCO and zero-shot and fine-tuned cross-modal retrieval (Tab. \ref{tab:coca_retrieval}) on Flickr30K and MS COCO.}

\begin{table}[thpb]
\centering
\caption{\ctn{Image captioning results on MS COCO with CoCa ViT-L/14 model. We fine-tune the CoCa on MS COCO for 1 epoch with three different learning objectives and evaluate them in terms of five conventional metrics. $m^2$-Mix achieves performance gain on all the five metrics.}}
\vspace{1em}
\small
\begin{tabular}{@{}lccccc@{}}
\toprule
\multicolumn{1}{c}{Method}                 & \multicolumn{4}{c}{Metrics}          \\
              & BLEU@4           & METEOR          & ROUGE-L        & CIDEr           & SPICE           \\ \midrule
ZS           & 7.2          & 12.4          & 26.3          & 35.2          & 9.3          \\ 
Cap           & 36.0          & 29.4          & 57.3          & 125.1          & 23.1          \\
CL + Cap      & 35.7          & 29.3          & 57.1          & 124.9          & 23.0          \\
CL w/ $\bL_{m^{2}\text{-Mix}}$ + Cap & \textbf{36.3} & \textbf{29.5} & \textbf{57.5} & \textbf{125.6} & \textbf{23.2} \\
\bottomrule
\end{tabular} \label{tab:coca_captioning}
\vspace{-1em}
\end{table}
\ctn{In Tab. \ref{tab:coca_captioning}, while a combination of vanilla contrastive loss with captioning loss underperforms the captioning-loss-only training, $m^2$-Mix-assisted contrastive learning further increases the performance on image captioning in terms of five conventional metrics. Besides, Tab. \ref{tab:coca_retrieval} shows that $m^2$-Mix generally improves the retrieval recalls of CoCa on zero-shot and fine-tuned settings. This implies that representation learning with our multi-modal Mixup is beneficial to generative tasks as well as discriminative tasks. The consistent improvement shown in the image captioning task is accorded with observations from other recent works~\cite{fang2022stemm, cheng2023mixspeech} that reveal the effectiveness of cross-modal Mixup on generative tasks by increasing cross-modal alignment.}
\begin{table}[thpb]
\vspace{-1em}
\centering
\caption{Cross-modal retrieval top-1/5 recalls on MS COCO (fine-tuned) and Flickr30k (zero-shot transfer). $m^2$-Mix generally enhances image-text retrieval of a SOTA vision-language model, CoCa.}
\vspace{1em}
\small
\begin{tabular}{@{}lcccc@{}}
\toprule
\multicolumn{1}{c}{Method}  & \multicolumn{4}{c}{(Zero-shot) Flickr30k}               \\
              & i $\rightarrow$ t (R1)       & i $\rightarrow$ t (R5)       & t $\rightarrow$ i (R1)       & t $\rightarrow$ i (R5)       \\ \midrule 
Cap           & 90.4          & 98.5          & 78.5          & 94.4          \\
CL + Cap      & 92.4          & \textbf{99.1} & 79.2          & 94.9          \\
CL w/ $\bL_{m^{2}\text{-Mix}}$ + Cap & \textbf{92.8} & 99.0          & \textbf{79.5} & \textbf{95.1} \\ \midrule
\multicolumn{1}{c}{Method}  & \multicolumn{4}{c}{(Fine-tuned) MS COCO}                   \\
              & i $\rightarrow$ t (R1)       & i $\rightarrow$ t (R5)       & t $\rightarrow$ i (R1)       & t $\rightarrow$ i (R5)       \\ \midrule 
Cap           & 68.5          & 87.9          & 53.2          & 77.8          \\
CL + Cap      & \textbf{73.9} & \textbf{91.2} & 56.3          & 80.4          \\
CL w/ $\bL_{m^{2}\text{-Mix}}$ + Cap & \textbf{73.9}    & 91.0          & \textbf{56.5} & \textbf{80.6} \\
\bottomrule
\end{tabular} \label{tab:coca_retrieval}
\end{table}
\vspace{-1em}
\section{Conclusion}
This paper analyzes the representation learned by a multi-modal contrastive learner. We found that CLIP has separated text-versus-image embedding space \sty{with poor uniformity-alignment.} These polarized embeddings with huge unexploited space may limit the transferability and robustness of representation on downstream tasks. From our findings, we propose \textit{Geodesic Multi-Modal Mixup} that generates hard negatives for robust contrastive learning by mixing two heterogeneous embeddings. Theoretically, we validate that our method produces hardness-guaranteed samples and has desirable asymptotic behavior to induce better generalizable representation. Empirically, the proposed method effectively improves performances on diverse \sty{tasks and perspectives}: retrieval, calibration, few-shot classification under distribution shift, embedding arithmetic, \ctn{and image captioning}.

Though the increased uniformity and alignment of multi-modal representation largely empowers the model to adapt robustly to a diverse range of tasks, we found that reckless uplift of them is harmful in some cases of retrieval (modest increment of uniformity-alignment was better than huge increment of them). Thus, more research on the reliable evaluation of multi-modal representation should be pursued in the era of foundation models.

\paragraph{Acknowledgement} This work was supported by the National Research Foundation of Korea (NRF) grant funded by the Korea government (MSIT) (No.2021R1F1A1060117 and No.2022R1A4A3033874), and also supported by a grant (22183MFDS431) from Ministry of Food and Drug Safety in 2023.

{
\small
\bibliographystyle{unsrt} 
\bibliography{main}
}

\newpage
\appendix

\section{Experiment Setup} \label{appendix:exp_setting}
We first elaborate on the setups for all the experiments, including retrieval and embedding arithmetic, uni-modal classification, multi-modal classification, and Contrastive Captioner (CoCa) image captioning and retrieval, in each separate section.

\subsection{Cross-modal Retrieval and SIMAT}
\paragraph{Dataset} Here, we hold the Flickr30k and MS COCO, two representative vision-language benchmark datasets. Flickr30k contains 30K image-text pairs as a train split\footnote{https://www.kaggle.com/hsankesara/flickr-image-dataset}, 1k for validation and test splits\footnote{https://github.com/BryanPlummer/flickr30k\_entities}. 
For MS COCO, we adopt the 2017 version of it from the COCO Database\footnote{https://cocodataset.org/\#download}. MS COCO contains 118k image-text pairs for train split and 5k for both validation and test splits. When there are multiple captions for one image, we always select the first caption to construct an image-text pair.
To validate the multi-modal embedding arithmetic, we use the SIMAT dataset~\cite{couairon2021embedding}. SIMAT is a benchmark created for evaluating the text-driven image transformation performance of multi-modal embedding. It contains 6k images, 18k transformation queries that have pairs of (source word, target word, source image, target image), and 645 captions constructed with subject-relation-object triplets that have at least two corresponding images. The goal of SIMAT task is to retrieve an image, which is well-modified by a specific text transform to match with the ground truth transform target images. 

\paragraph{Model Description} For retrieval and embedding arithmetic tasks, we adopt CLIP ViT-B/32 checkpoint of OpenAI official lease\footnote{https://github.com/openai/CLIP} as our backbone model. For cross-modal retrieval with disjointly pre-trained uni-modal models, we utilize ResNet-50~\cite{he2016deep} with a pre-trained checkpoint of \texttt{torchvision} as an image encoder and \texttt{BERT-base-uncased} from HuggingFace as a text encoder. To match the dimensions of these two uni-modal models, we add a projection head on top of each encoder, respectively.

\paragraph{Baseline Methods} First, we consider the zero-shot inference of CLIP (ZS)~\cite{radford2021learning} as a strong baseline (in the case of retrieval with uni-modal pre-trained models, we just project the image and text embeddings to shared vector space with randomly initialized matrix, and perform similarity-based inference as ZS.), and embedding shift (ES)~\cite{liang2022mind} which computes a delta vector (difference between mean vectors of image and text embeddings) and then manually modifies the modality gap along with delta vector direction without explicit training. Then, a vanilla fine-tuning (FT) with standard contrastive loss (Eq. 1 of main paper) and its higher-temperature variants ($\tau=\{0.05, 0.01\}$) are considered. Additionally, we take account of two uni-modal mixup-based contrastive learning methods $i$-Mix~\cite{lee2021imix} and Un-Mix~\cite{shen2022unmix} those mix images in the input space. While the original implementation of $i$-Mix takes a randomly sampled image as a mixture component, we take a flipped batch sample as a mixture component for computational efficiency like as Un-Mix. So the only difference between $i$-Mix and Un-Mix is whether we construct the final objective as a sum of normal and mixed sample contrastive loss~\cite{shen2022unmix} or sorely mixed sample contrastive loss (\cite{lee2021imix}).

\paragraph{Metric} As a standard metric for retrieval tasks, we report top-1 recall (R1) and top-5 recall (R5) on both image-to-text and text-to-image directions. For SIMAT task, following the original paper~\cite{couairon2021embedding}, we performed the OSCAR-based evaluation and reported the SIMAT score in the original paper. It measures the similarity between the transformed image and text captions via OSCAR framework~\cite{li2020oscar}. 

\paragraph{Implementation Detail} We fine-tune CLIP with \texttt{eval()} mode stable training\footnote{https://github.com/openai/CLIP/issues/150/} and under \texttt{FP16} precision for computational efficiency. On both Flickr30k and MS COCO, we train each method over 9 epochs with batch size 128 via Adam optimizer ($\beta_{1}=0.9$, $\beta_{2}=0.98$, and $\epsilon=1e-6$). As shared hyperparameters, we search for the best initial learning rate from \{1e-6, 3e-6, 5e-6, 7e-6, 1e-5\} and weight decay from \{1e-2, 2e-2, 5e-2, 1e-1, 2e-1\} for all training methods (Initial learning rate is decayed in each epoch by the exponential scheduler with decaying parameter 0.9). To construct our complete objective $m^{3}\text{-Mix}$, we weighing the $L_{\text{CLIP}}$ and $L_{m^{2}\text{-Mix}}$ and uni-modal geodesic Mixup variants ($L_{V/L/VL\text{-Mix}}$). Specifically, we pivot the weight of $L_{\text{CLIP}}$ as 1.0 and sweep the weighting coefficient of other loss components for each dataset generally from \{0.0, 0.01, 0.1, 0.2, 0.3, 0.5\}\footnote{We scheduled the strength of sum of the Mixup-based loss terms by \texttt{L\_mix$/$epoch}}. The parameter $\alpha$ of Beta distribution $Beta(\alpha, \alpha)$ that determines the mixture ratio is set to 0.5 for the multi-modal Mixup and 2.0 for uni-modal Mixups. For embedding shift (ES)~\cite{liang2022mind}, we sweep $\lambda$ from -0.1 to 0.1 by 0.01 and report the best results among them. While the search range of ES from official implementation is from -2.5 to 2.5 by 0.125, we observe the finer search range gives better results. 

\subsection{Uni-modal Classification}
\paragraph{Dataset} We consider three common transfer learning benchmark datasets, OxfordPets~\cite{6248092}, SVHN~\cite{37648}, and CLEVR~\cite{johnson2017clevr}, to validate the general few-shot adaptation capability. For evaluation of robustness on distribution shift, we consider the ImageNet-1k as a source dataset (models are adapted to) and ImageNetV2~\cite{recht2019imagenet}, ImageNet-A~\cite{hendrycks2021natural}, ImageNet-R~\cite{hendrycks2021many}, and ImageNet-Sketch~\cite{wang2019learning} as target evaluation datasets those are considered as different kinds of natural distribution shift from ImageNet.

\paragraph{Model Description} For uni-modal few/zero-shot classifications, we also adopt CLIP ViT-B/32 as the default backbone for ours and baseline fine-tuning methods in our manuscript and also evaluate CLIP ViT-B/16 and CyCLIP ResNet50 in Section \ref{appendix:results} of this Supplementary Material.

\paragraph{Baseline Methods} As standard baselines, we first consider zero-shot CLIP (ZS) and vanilla fine-tuning (FT) with contrastive loss. Then, we perform exhaustive ablation ($V$-, $L$-, $VL$-, and $m^{2}$-Mix) as well as our complete objective $m^{3}$-Mix with its high-temperature variant. To further compare our approach with state-of-the-art fine-tuning methods, we consider MaPLe~\cite{khattak2022maple} that optimizes the continuous prompts inside the text and image encoders of CLIP, and the contrastive loss extended version of uni-modal fine-tune methods: LP-FT~\cite{kumar2022fine} which trains classification head and full modal separately in a two-stage manner, and WiSE-FT~\cite{wortsman2021robust} which performs parameter-space ensemble between the pre-trained checkpoint and fine-tuned checkpoint. Additionally, we consider the ES and our $m^{3}\text{-Mix}$ as the \textit{plug-in} methods to improve the above three state-of-the-art fine-tuning methods that are denoted as method names w/ ES or $m^3\text{-Mix}$ in Tab. 4 and 5 of the main paper.

\paragraph{Metric} For both the few-shot adaptation and distribution shift setting, we report top-1 accuracy as the In-Distribution Accuracy (ID Acc.) and Out-Of-Distribution Accuracy (OOD Acc.), respectively. 

\paragraph{Implementation Detail} In this paper, we propose new contrastive losses $m^{2}\text{-Mix}$ and $m^{3}\text{-Mix}$ which consume the image-text paired instances. However, the above datasets provide class name labels only and do not have captions corresponding to each image. To make CL methods amendable for this setting, we adopt a common prompt \texttt{'a photo of {classname}'} that wraps the class name with a short context and use this as captions of images. Different from image-caption-based contrastive learning on Flickr30k and MS COCO, a batch of ImageNet-1K contains multiple samples that are assigned to the same class. We construct the label map for contrastive loss by regarding all of the samples from a class as positives. Following~\cite{zhou2022conditional, khattak2022maple}, we perform the tasks under the same few-shot evaluation protocol: 16-shot training samples per class and inference on the entire test set. 
To construct the contrastive loss, we first compute the pivot classifier embedding by forwarding all possible class category names to the text encoder. Then, we calculate the pairwise similarity between in-batch image embedding and pivot embedding and construct the label matrix by reflecting the fact that there are many positive images corresponding to a text embedding (for each class). To implement the contrastive loss with multi-modal Mixup, we mix the in-batch image embedding and text embedding and contrast the resulting mixed embedding with image and pivot embedding, respectively.

About training configuration, in the distribution shift setting, we train all methods (except MaPLe) on 20 epochs with batchsize 100 via AdamW optimizer with default parameters. Due to MaPLe's huge memory requirements, we set the batchsize to 4 and train a single epoch. As shared hyperparameters, we pivot the initial learning rate to 1e-6 and search for the best maximum learning rate from \{1e-6, 3e-6, 5e-6, 7e-6, 1e-5\} and weight decay from \{0, 1e-3, 5e-3, 1e-2, 5e-2 1e-1\} for all training methods (except MaPLe's learning rate sweep from \{5e-3, 1e-3, 5e-4, 1e-4\}). Here we use the one-cycle cosine learning rate scheduler. For the few-shot adaptation in a general setting, we train each method over 200 epochs (40 epochs for MaPLe) with the same batchsize, optimizer, and hyperparameter sweep range. In both two settings, we use the same data augmentation procedure (random resize crop and random flip)~\cite{zhou2022conditional, khattak2022maple} for all methods. Note that for LP-FT, we train both the linear head and full models during half of the entire epochs, and we do not use data augmentation in the linear head training phase following the authors' proposal. For our methods, weighting coefficient of $m^2$-Mix and uni-modal mixups are explored over \{0.01, 0.1, 0.2, 0.3, 0.4, 0.5\}, and the parameters of Beta distribution are swept over \{0.2, 0.5\}.

\subsection{Multi-modal Classification}
\paragraph{Dataset} To evaluate the multi-modal representation learning under video emotional classification, we consider the CMU-MOSEI~\cite{7742221}, a popular benchmark for multi-modal sentiment analysis. CMU-MOSEI consists of three modalities textual (T), visual (V), and audio (A), and contains 23,453 YouTube video clips about diverse movie reviews, and each clip is annotated with ordinal labels ranging from -3 (strong negative) to 3 (strong positive). In the training phase, three modalities are fully available to all methods, and only one or two modalities are given in the evaluation phase to measure the robustness under modality missing as well as the informativeness of individual-modality representations.

\paragraph{Model Description and Baseline Methods} To construct backbone architecture, following Poklukar et al.~\cite{poklukar2022gmc}, we adopt the Multimodal Transformer (MulT)~\cite{tsai2019multimodal} as a joint-modal encoder which enables the commutation among modalities with cross-modal attention block. To enhance the explicit alignment between modalities, Poklukar et al.~\cite{poklukar2022gmc} propose Geometric Multimodal Contrastive Learning (GMC). In addition to the joint encoder. GMC introduces lightweight modality-specific encoders constructed by a single Gated Recurrent Unit~\cite{cho2014learning} followed by a linear projection layer, then performs contrastive learning between joint representation (from a joint encoder) and uni-modal representation (from modality-specific encoders). We set the MulT as a standard baseline, GMC as a contrastive learning-enhanced baseline, and then plug our $m^{2}\text{-Mix}$ to GMC objective to validate whether our method can give additional benefits to multi-modal representation learning. 

While MulT learns the joint encoder only with standard classification loss (i.e., cross-entropy loss; $L_{\text{ce}}$), GMC learns joint and modality-specific encoders with the objective function $L_{\text{ce}}+L_{\text{GMC}}$ where $L_{\text{GMC}}$ deals with the sum of all one-to-joint contrastive losses. On top of GMC, $L_{m^2\text{-Mix}}$ is integrated with a trade-off hyperparameter $\beta$: $L_{\text{ce}}+L_{\text{GMC}}+\beta L_{m^2\text{-Mix}}$.

\paragraph{Metric} As mentioned earlier, in the inference time, each method can encounter partial modalities among T, V, and A. To this end, we evaluate each method under 7 environments sorted by the available modalities: (T), (V), (A), (T,V), (T,A), (V,A), (T,V,A). Then, we measure the classification accuracy, F1-score (in supplementary material), uniformity, and alignment.

\paragraph{Implementation Detail} We train all methods over 40 epochs with batchsize 128 via Adam optimizer with the default configuration. Following~\cite{poklukar2022gmc}, we set the learning rate to 1e-3 and do not apply the weight decay. The trade-off parameter $\beta$ and the parameter $\alpha$ of Beta distribution \texttt{Beta($\alpha$, $\alpha$)} are optimized among \{0.1, 0.2, 0.3, 0.4, 0.5\} and \{0.5, 1.0, 1.5, 2.0\}, respectively. Those are selected $\beta=0.2$ and $\alpha=2.0$.
To implement the contrastive loss with $m^2\text{-Mix}$, we randomly sample two modalities for each training epoch and mix them to build a mixed representation. Then, we compute (1) the positive similarity between paired mixed and joint representation and (2) the negative similarity between non-paired mixed and joint representation. Finally, we compute the modified $L_{m^2\text{-Mix}}$ as a negative logarithm of the sum of positive similarities over the sum of negative similarities.
Different from the CLIP fine-tuning cases, we use the multi-modal mixed representation as both positive and negative pairs with the target joint representation because our goal in this task is to align the embedding between the joint and other modalities.
For inference with partial modalities, we average the representations from given modalities to make a single embedding that is sent to the classifier head for a fair comparison with~\cite{tsai2019multimodal, poklukar2022gmc}.

\subsection{Multi-modal Mixup for Contrastive Captioner}
\paragraph{Dataset} To demonstrate the effectiveness of the multi-modal Mixup on a state-of-the-art vision-language model, Contrastive Captioner (CoCa), we perform cross-modal retrieval on Flickr30k and MS COCO, and image captioning task on MS COCO (that of 2014). 

\paragraph{Model Description and Baseline Methods} We adopt LAION-2B pre-trained CoCa ViT-L/14 from \texttt{OpenCLIP} library as our backbone model, and consider three learning objectives for CoCa fine-tuning: (1) autoregressive captioning loss (Cap), (2) contrastive loss and captioning loss (CL $+$ Cap), and (3) contrastive loss, $\bL_{m^{2}\text{-Mix}}$, and captioning loss (CL w/ $\bL_{m^{2}\text{-Mix}} +$ Cap).

\paragraph{Metric} For image-text retrieval, we adopt top-1 and top-5 recalls likewise CLIP retrieval setup. For image captioning, five standard metrics: BLEU-4~\cite{10.3115/1073083.1073135}, METEOR~\cite{banerjee-lavie-2005-meteor}, ROUGE-L~\cite{lin-2004-rouge}, CIDEr~\cite{vedantam2015cider}, and SPICE~\cite{anderson2016spice} are evaluated.

\paragraph{Implementation Detail} For all three methods, we train the model on MS COCO over one epoch with \texttt{OpenCLIP}\footnote{https://github.com/mlfoundations/open\_clip}-provided hyperparameter configuration, i.e., 128 as batch size, 1e-5 as learning rate, 0.1 as weight decay, and 1000 as learning rate warm-up steps. After fine-tuning on MS COCO, we evaluate the model on MS COCO for fine-tuned image-text retrieval and image captioning, and on Flickr30k for zero-shot transferred image-text retrieval. Here, we adopt \texttt{CLIP\_benchmark}\footnote{https://github.com/LAION-AI/CLIP\_benchmark} library for easy evaluation. For training of CL $+$ Cap, we weight $\bL_{CL}$ as 1.0 and $\bL_{Cap}$ as 2.0. Our $m^2\text{-Mix}$ related parameters are explored over \{0.1, 0.2, 0.3, 0.4, 0.5, 1.0\} for Beta distribution parameter and \{0.1, 0.2, 0.25, 0.3, 0.35, 0.4, 0.5, 0.7, 1.0\} for $\bL_{m^2\text{-Mix}}$ weighting coefficient.

\newpage

\section{Pseudo Code} \label{appendix:pseudo}

\begin{algorithm2e}
\SetAlgoLined
    \PyComment{X,Y : image batch, text batch}\\ 
    \PyComment{f,g : learnable image encoder and text encoder}\\
    \PyComment{t1, t2 : trainable temperature parameters}\\
    \PyComment{alpha1, alpha2 : parameters for Beta Distribution}\\
    \PyComment{args.\{m2mix, vmix, lmix, vlmix\} : weighting parameters}\\
\PyCode{def ce(logits,targets):}\\
\Indp
    \PyCode{return (-targets*nn.LogSoftmax(dim=-1)(logits)).sum()}\\
\Indm
\PyCode{def cross\_entropy\_2D(logits,targets):}\\
\Indp
    \PyCode{return ((ce(logits,targets)+ce(logits.T,targets.T))/2).mean()}\\
\Indm
\PyCode{def geodesic\_mix(lambda,a,b):}\\
\Indp
    \PyCode theta = torch.acos( (a*b).sum(dim=[1])).view(a.shape[0],1)\\
    \PyCode n1 = torch.sin(lambda*theta)/torch.sin(theta)*a\\
    \PyCode n2 = torch.sin((1-lambda)*theta)/torch.sin(theta)*b\\
    \PyCode return n1+n2\\
\Indm
\PyCode{ }\\
\PyCode{def ContrastiveLoss(X,Y,f,g,t1,t2,args)}\\
\Indp
    \PyCode{I = torch.eye(X.shape[0])} \\ 
    \PyCode{I\_R = torch.flip(I,dims=[0])}\\
    \PyCode{I\_X, I\_XD = I+I\_R, 1-(I+I\_R)}\\
    \PyCode{If, Tf = f(X), g(Y)} \PyComment{L2 normalized features} \\ 
    \PyCode{logits = If@Tf.T} \PyComment{Original logit} \\
    \PyCode{loss = cross\_entropy\_2D(logits/t1,I)} \PyComment{2D Cross Entropy}\\
    \PyCode{if args.m2mix:} \\
    \Indp
        \PyCode{lambda = random.betavariate(alpha2,alpha2)} \\
        \PyCode{mix = geodesic\_mix(lambda,If,Tf)} \\
        \PyCode{logits2\_i = mix@Tf.T} \\
        \PyCode{logits2\_i = logits*I + logits2\_i*(1-I)} \\
        \PyCode{logits2\_t = mix@If.T} \\
        \PyCode{logits2\_t = logits.T*I + logits2\_t*(1-I)} \\
        \PyCode{loss += args.m2mix*(ce(logits2\_i/t2,I) + ce(logits2\_t/t2,I))/2} \\
    \Indm
    \PyCode{if args.vmix:}  \\
    \Indp
        \PyCode{lambda = random.betavariate(alpha1,alpha1)} \\
        \PyCode{mix = geodesic\_mix(lambda,If,If.flip())} \\
        \PyCode{logits2 = mix@Tf.T} \\
        \PyCode{logits2 = logits2*I\_X + logits*I\_XD} \\
        \PyCode{loss += args.vmix*cross\_entropy\_2D(logits2/t1,lambda*I + (1-lambda)*I\_R)} \\
    \Indm
    \PyComment{L-Mix is omitted} \\ 
    \PyCode{if args.vlmix:} \\
    \Indp
        \PyCode{lambda = random.betavariate(alpha1,alpha1)} \\
        \PyCode{mix\_I = geodesic\_mix(lambda,If,If.flip())} \\
        \PyCode{mix\_T = geodesic\_mix(lambda,Tf,Tf.flip())} \\
        \PyCode{logits2 = mix\_I@mix\_T.T} \\
        \PyCode{logits2 = logits2*I + logits*(1-I)} \\
        \PyCode{loss += args.vlmix*cross\_entropy\_2D(logits2/t1,I)} \\
    \Indm
    \Indp
    \PyCode{return loss}
\Indm
\caption{PyTorch-style Implementation Code for Geodesic Multi-Modal Mixup}
\label{algo:your-algo}
\end{algorithm2e}

\newpage

\section{Additional Results} \label{appendix:results}
\subsection{Mixed Embedding Analysis}
In Fig. \ref{fig:mix_retrieval}, we post examples of image-retrieval results by our $m^{2}$-Mix. Big images and text on the left denote the original image and text pair. The right top and bottom denote the top-3 retrieved images from the original image embedding and mixed embedding, respectively. Overall, retrieved images by $m^{2}$-Mixed embedding contain more rich semantics that is derived from both image and text.
\begin{figure}[h!]
    \centering
    \includegraphics[width=0.95\textwidth]{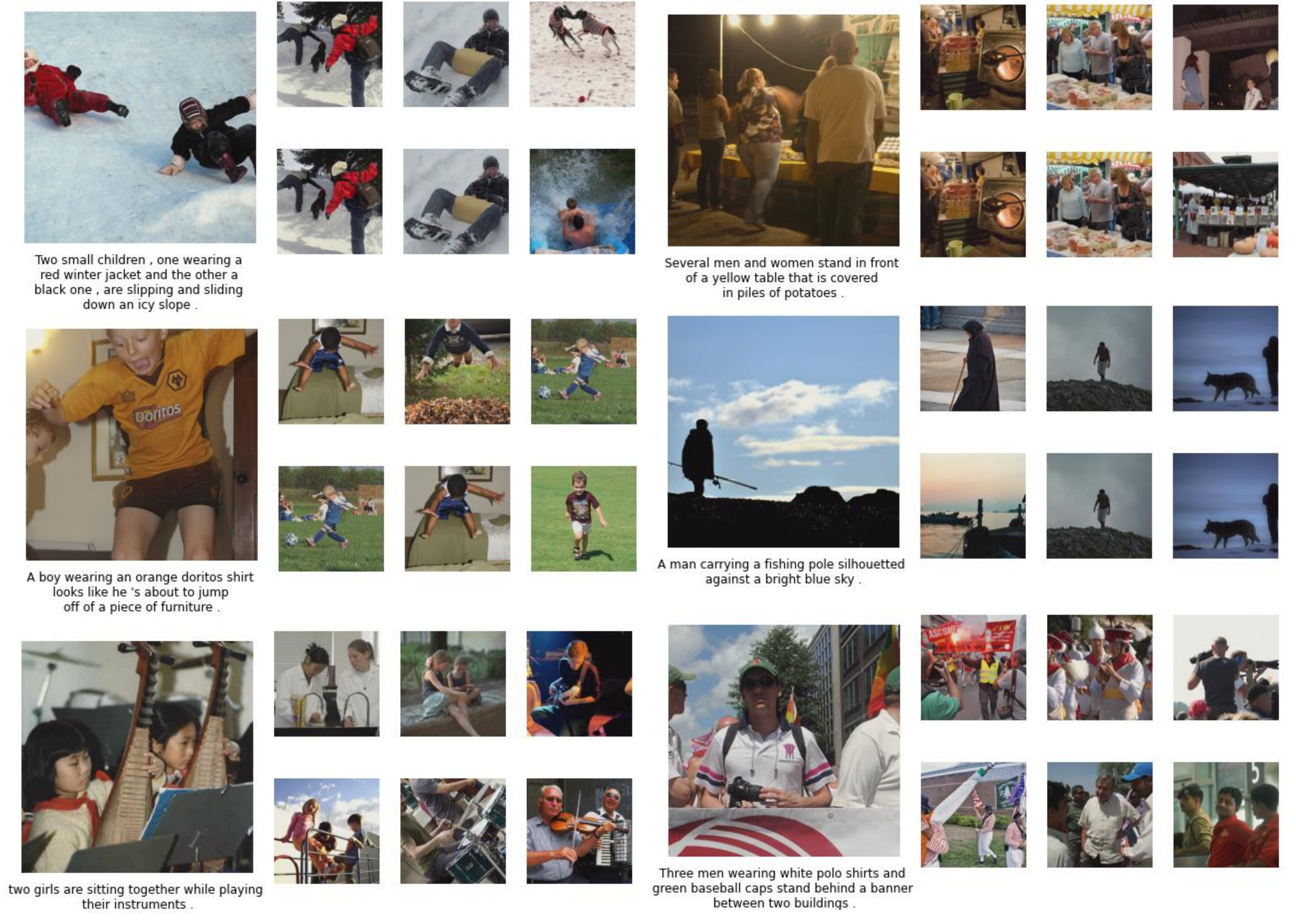}
    \caption{Retrieved images by original image embedding and mixed embedding on Flickr30k.}
    \label{fig:mix_retrieval}
\end{figure}

\subsection{Additional Results on Uni-modal Classification}
This section provides results of 16-shot uni-modal classification on four new datasets (EuroSAT~\cite{helber2019eurosat}, FGVC Aircraft~\cite{maji2013fine}, UCF101~\cite{soomro2012ucf101}, Stanford Cars~\cite{6755945}) with two different models (CLIP ViT-B/16 and CyCLIP~\cite{goel2022cyclip} ResNet50) that are lacking in the main paper. We perform fine-tuning of them from their official checkpoint relase\footnote{https://github.com/openai/CLIP}\footnote{https://github.com/goel-shashank/CyCLIP} with the same hyperparameter sweep range described in \ref{appendix:exp_setting} of Supplementary.
In Table \ref{appendix:add_uniclf}, our $m^2$-Mix brings consistent performance gain across all datasets and models with some significant boosting on the FGVC Aircraft and Stanford Cars datasets. Thus, $m^2$-Mix is a general approach that can enhance the representation learning on various settings.

\begin{table}[bthp]
\centering
\caption{Few-shot classification results with CLIP-VIT-B/16 and CyCLIP-RN50 models. We follow the same few-shot evaluation protocol and contrastive learning strategy with Sec 5.4. in our manuscript. $m^2$-Mix consistently outperforms the baseline methods across four datasets. Especially, on Aircraft, $m^2$-Mix achieves 8.5\% and 4.7\% gain over FT in CLIP and CyCLIP, respectively, and 5.2\% gain over FT in CyCLIP on Cars.}
\vspace{1.25em}
\begin{tabular}{ll|cccc}
\toprule
\multicolumn{1}{c}{Model} & \multicolumn{1}{c}{Method}                 & \multicolumn{4}{|c}{Dataset}                                                                                \\ 
                          &                                            & \multicolumn{1}{|l}{EuroSAT} & \multicolumn{1}{l}{Aircraft} & \multicolumn{1}{l}{UCF101} & \multicolumn{1}{l}{Cars} \\ \midrule
CLIP (ViT-B/16)           & ZS                                         & 48.41                    & 24.81                   & 67.46                      & 65.33                    \\
                          & FT                                         & 94.03                    & 60.61                   & 86.36                      & 88.58                    \\
                          & FT w/ $m^{2}$-Mix           & \textbf{94.33}           & \textbf{69.07}          & \textbf{86.94}                      & \textbf{90.36}           \\ \midrule
CyCLIP (RN50)            & FT                                         & 84.98                    & 48.19                   & 67.25                      & 67.02                    \\
                          & FT w/ $m^{2}$-Mix           & \textbf{85.22}           & \textbf{52.96}          & \textbf{68.97}             & \textbf{72.22}           \\ \bottomrule                         
\end{tabular} \label{appendix:add_uniclf}
\end{table}

Next, in Figure \ref{fig:test2} and \ref{fig:test3}, we perform ablation on the parameter $\alpha$ of Beta distribution, which stochastically determines the mixing ratio between two modalities. Red and Blue colors denote a constant parameter and a linear scheduling parameter, respectively. We see that lower value $\alpha$ (U-shaped Beta distribution) generally achieves better performance than larger values (uniform or reversed U-shaped) on the two classification datasets.

Linear scheduling of Beta parameters drives promising results in some cases, e.g., $1.0 -> 0.1$ and $2.0 -> 0.1$ in Stanford Cars. It seems crucial to enforce that the shape of Beta distribution ends up with a U-shape for the success of scheduling variants. That is, the small-to-many mixing fashion is better than that of half-to-half for the geodesic multi-modal Mixup on classification.

\begin{figure}[h!]
    \centering
    \includegraphics[width=0.72\textwidth]{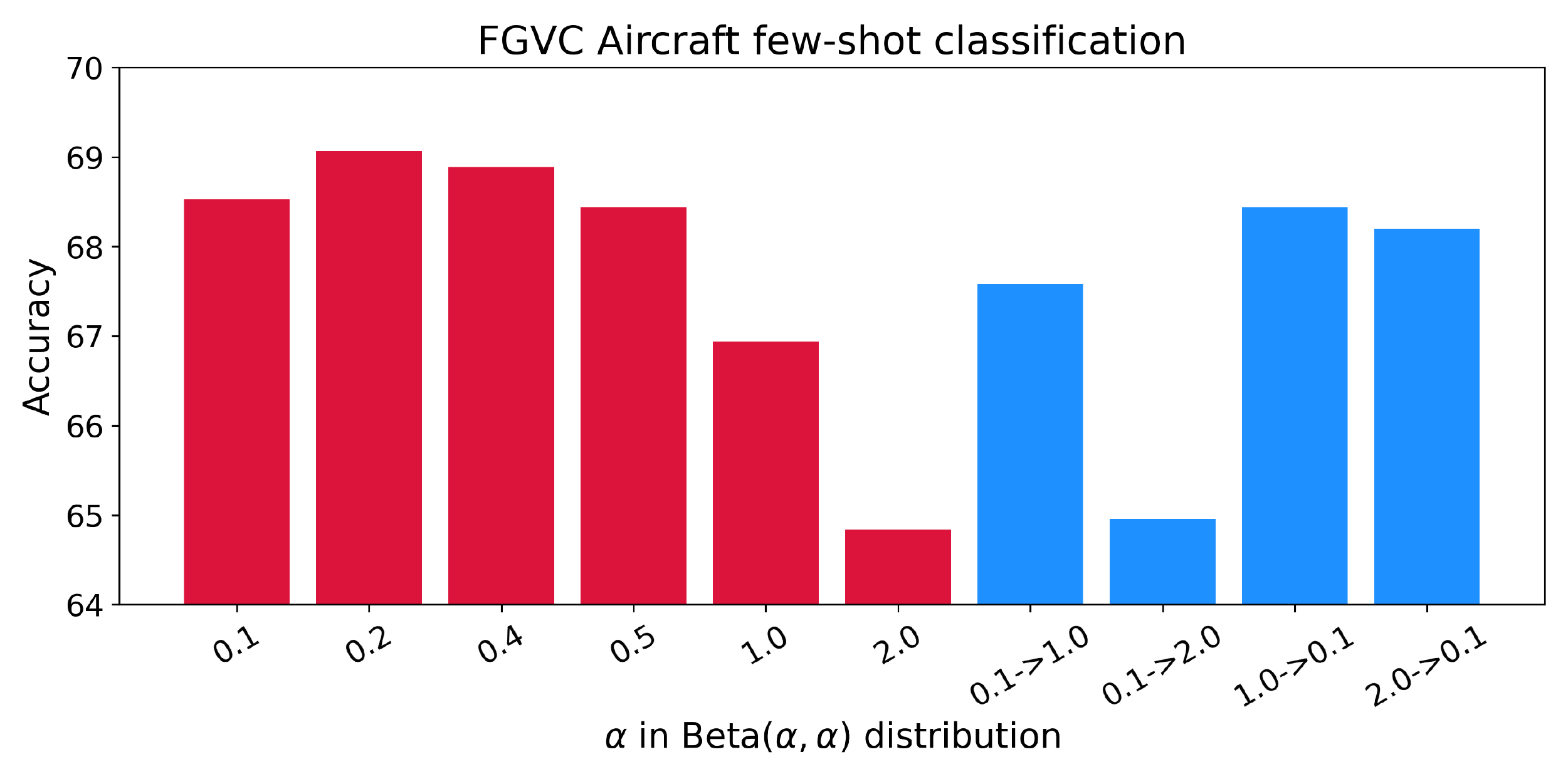}
    \caption{Few-shot classification results on FGVC Aircraft dataset. We varied the parameter of Beta distribution, constant (red) or scheduled (blue), to simulate diverse situations of mixed samples. It always achieves better performance than FT (60.61), even under varying parameters.}
    \label{fig:test2}
\end{figure}
\begin{figure}[h!]
\centering
    \includegraphics[width=0.72\textwidth]{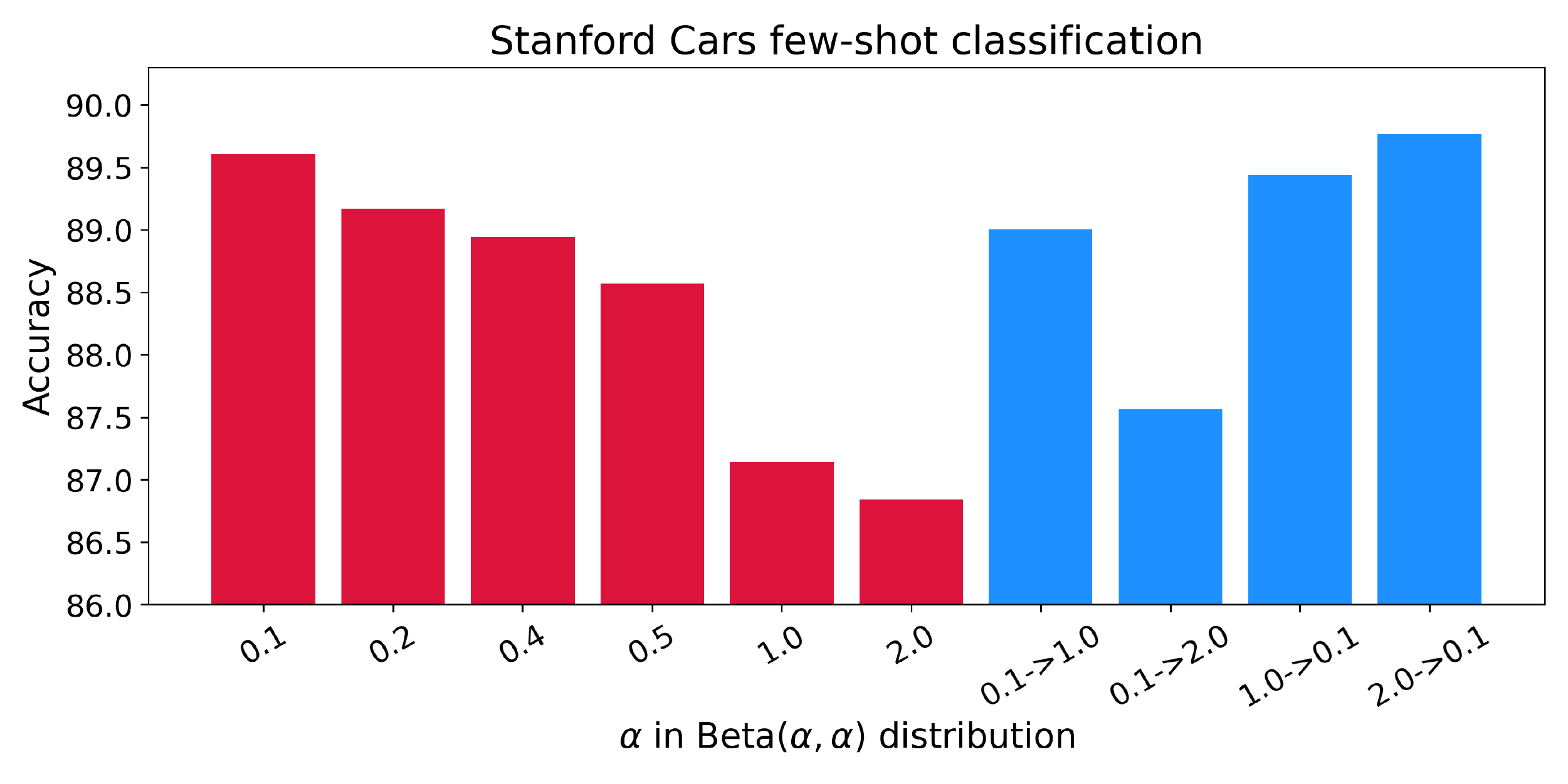}
    \caption{Few-shot classification results on Stanford Cars dataset. We varied the parameter of Beta distribution, constant (red) or scheduled (blue), to simulate diverse situations of mixed samples. It generally achieves better performance than FT (88.58), even under varying parameters.}
    \label{fig:test3}
\end{figure}

\subsection{Additional Results on Multi-modal Classification}
In addition to classification accuracy (in the main paper), we additionally present the F1-score (in Tab. \ref{tab:results_gmc_f1}) for diagnosis on classification results. While GMC with $m^2$-Mix is outperformed by GMC in two of three single-modality cases, it shows superior results on two-modality-given cases based on explicit enforcement of bi-to-joint alignment during training.

\begin{table*}[h!]
\caption{Classification F1-score on CMU-MOSEI under complete and partial observation modalities. We report the mean performance and standard deviation of five runs.}
\begin{adjustbox}{width=\textwidth}
\begin{tabular}{@{}lccccccc@{}}
\toprule
 \multirow{2}{*}{Method} & \multicolumn{7}{c}{Test-time Observed Modalities} \\ \cmidrule(l){2-8} 
 & Full(T+V+A) & T & V & A & T+V & T+A & V+A \\ \cmidrule(l){2-8} 
MulT & 0.8056$\pm$0.004& 0.6909$\pm$0.051 & 0.5678$\pm$0.107 & 0.6021$\pm$0.151 & 0.6453$\pm$0.096 & 0.6657$\pm$0.097 & 0.5922$\pm$0.111 \\
GMC & 0.8054$\pm$0.001 & 0.7846$\pm$0.006 & \textbf{0.6548$\pm$0.008} & \textbf{0.6910$\pm$0.008} & 0.7747$\pm$0.009 & 0.7810$\pm$0.003 & 0.6978$\pm$0.004 \\
GMC+$m^{2}$-Mix & \textbf{0.8086$\pm$0.001} & \textbf{0.7882$\pm$0.005} & 0.6522$\pm$0.006 & 0.6875$\pm$0.0080 & \textbf{0.7814$\pm$0.004} & \textbf{0.7840$\pm$0.004} & \textbf{0.6988$\pm$0.003} \\ \bottomrule
\end{tabular}
\label{tab:results_gmc_f1}
\end{adjustbox}
\end{table*}

\begin{figure}[thpb!]
  \centering
    \subfigure[MulT]{\includegraphics[width=0.32\linewidth]{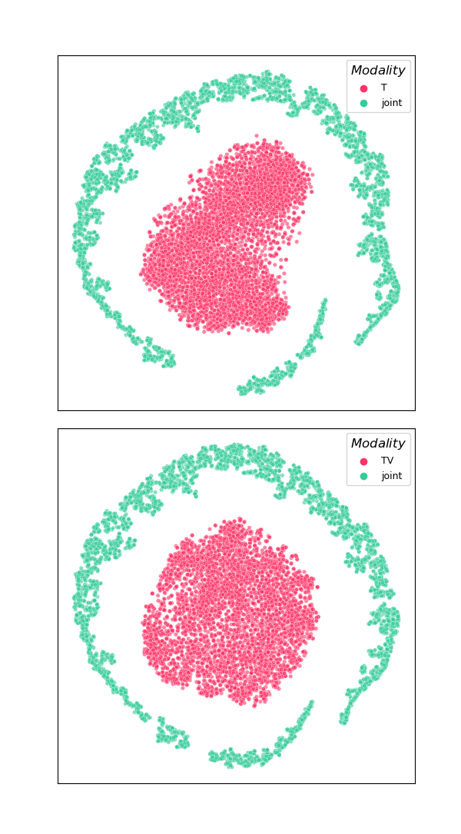}}
    \subfigure[GMC]{\includegraphics[width=0.32\linewidth]{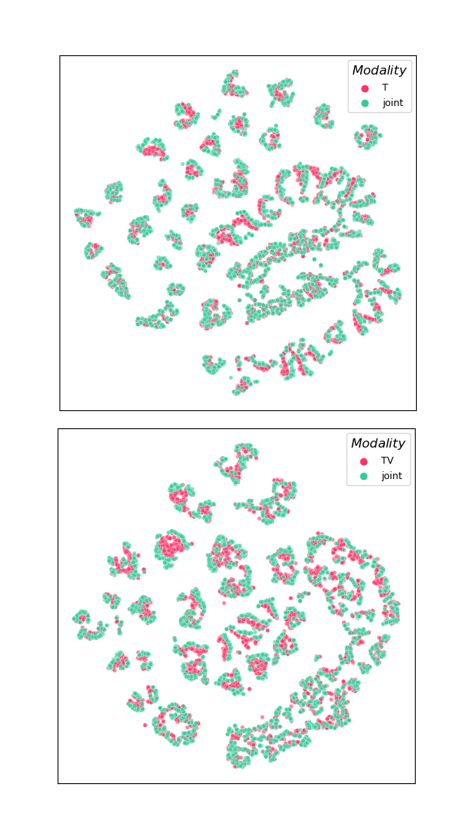}}
    \subfigure[GMC+$m^{2}$-Mix]{\includegraphics[width=0.32\linewidth]{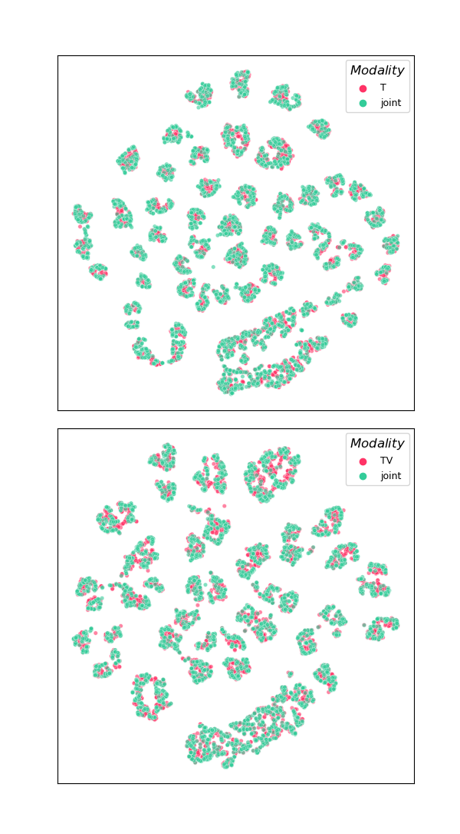}}
    
    \caption{t-SNE~\cite{van2008visualizing} for CMU-MOSEI, which has textual (T), visual (V), and audio (A) modalities. Top row represents when the only textual (T) information is given, and the bottom row corresponds to when the textual (T) and visual (V) information are given. The pink and green color denotes the embedding of partial and joint modality.}
    \label{fig:tsne_gmc_full}
\end{figure}

Fig. \ref{fig:tsne_gmc_full} shows the embedding t-SNE of each method given one (top row) or two (bottom row) modalities in test-time. Compared with MulT, GMC strongly aligns the embedding of partial and joint modality based on its explicit enforcement, and the alignment is further enhanced by the aid of $m^2$-Mix, which results in superior performance (in Tab. 7 of the main paper, Tab. \ref{tab:results_gmc_f1} of supplementary) when only partial (missing modality) information is given during test-time. These results justify the use of our $m^2$-Mix for robust multi-modal representation learning under missing modality scenarios.

\newpage
\subsection{Effect of $m^2\text{-Mix}$ on Contrastive Learning}
This section provides a more detailed analysis of $m^2$-Mix. Specifically, we present (i) the proportion of negative pairs (original and mixed) that exceed the similarity between that of positive pairs (See Fig. \ref{app:fig:coco_ihnp} - \ref{app:fig:flickr_thnp}), and (ii) the similarity comparison between positive and original negative pairs with and without $L_{m^2\text{-Mix}}$ (See Fig. \ref{app:fig_simplot_coco} and \ref{app:fig_simplot_flickr}). All results are from cross-modal retrieval with CLIP ViT-B/32 on Flickr30k and MS COCO.

Fig. \ref{app:fig:coco_ihnp} - \ref{app:fig:flickr_thnp} show the average proportion of in-batch negative pairs' similarities that exceeds the similarities of positive pairs during training iterations (dataset: MS COCO - Fig. \ref{app:fig:coco_ihnp} and \ref{app:fig:coco_thnp}, Flickr30k - Fig. \ref{app:fig:flickr_ihnp} and \ref{app:fig:flickr_thnp}, similarity computation: $I$-to-mixed - Fig. \ref{app:fig:coco_ihnp} and \ref{app:fig:flickr_ihnp}, $T$-to-mixed - Fig. \ref{app:fig:coco_thnp} and \ref{app:fig:flickr_thnp}). 

\begin{minipage}{\textwidth}
\begin{minipage}[htpb!]{0.475\textwidth}
\centering
\includegraphics[width=\textwidth]{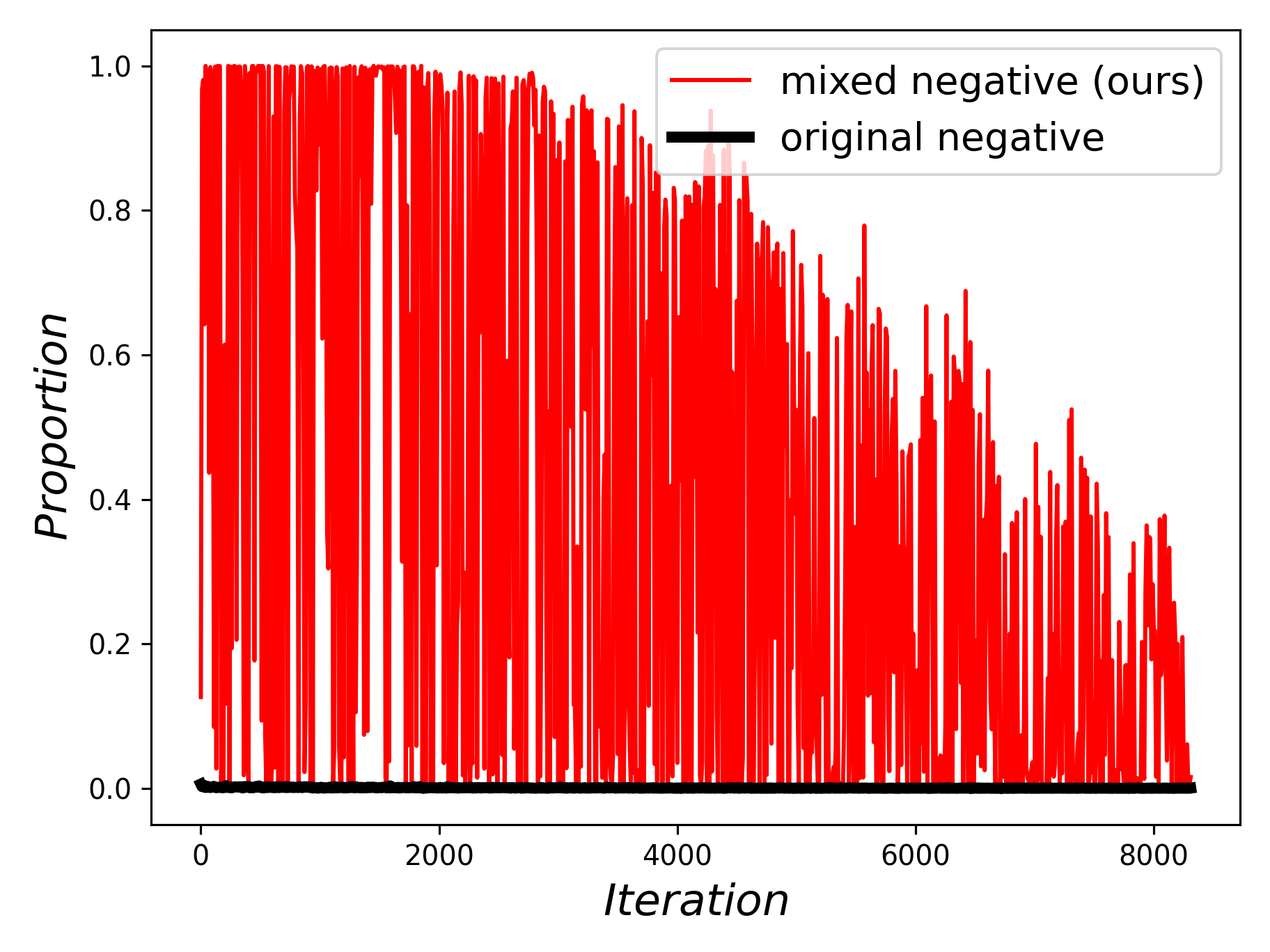}
\captionof{figure}{Hard negative proportion by $I$-to-mixed samples' similarities on MS COCO.}
\label{app:fig:coco_ihnp}
\end{minipage}
\hfill
\begin{minipage}[htpb!]{0.475\textwidth}
\centering
\includegraphics[width=\textwidth]{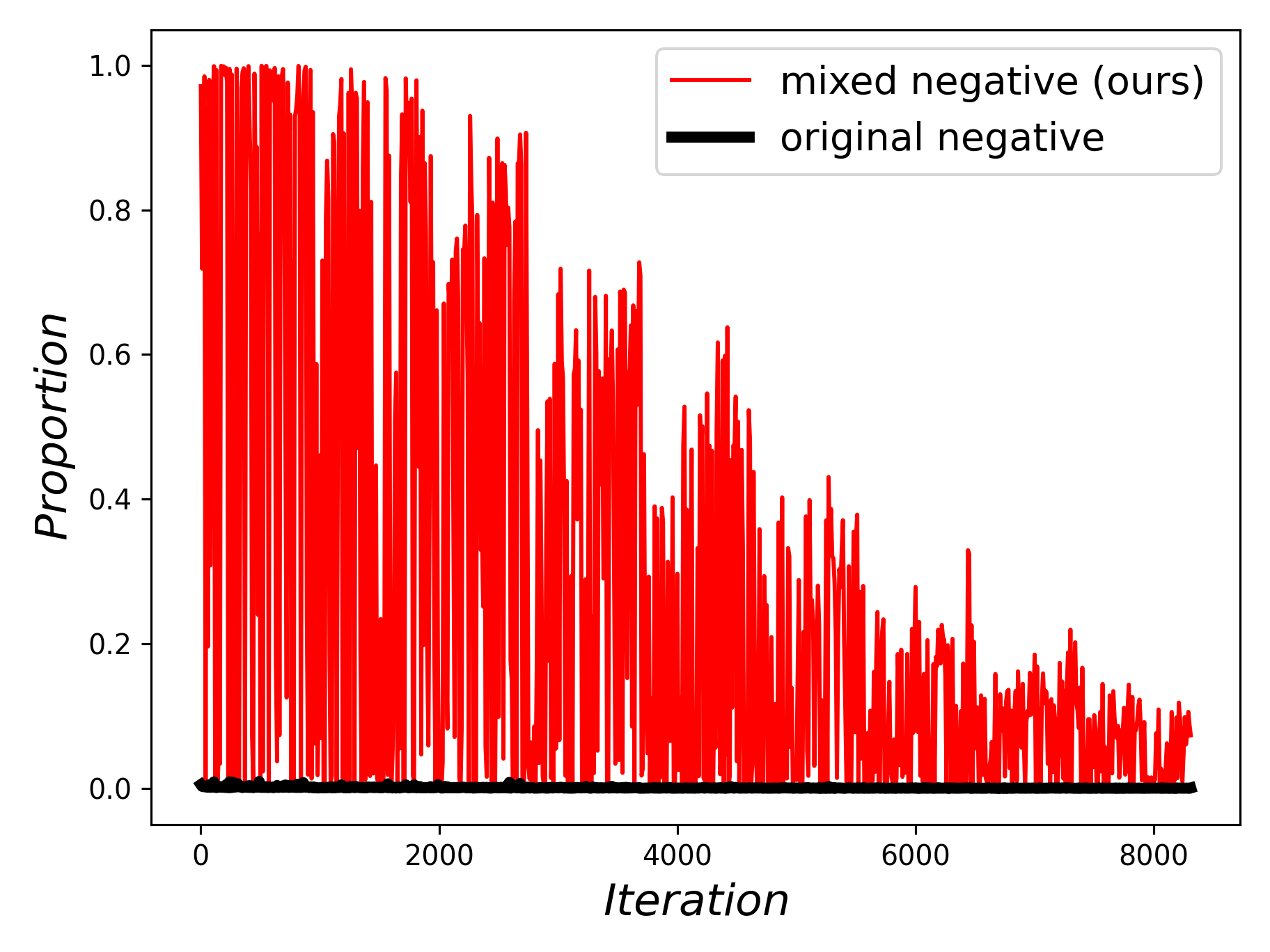}
\captionof{figure}{Hard negative proportion by $T$-to-mixed samples' similarities on MS COCO.}
\label{app:fig:coco_thnp}
\end{minipage}
\end{minipage}

\begin{minipage}{\textwidth}
\begin{minipage}[htpb!]{0.475\textwidth}
\centering
\includegraphics[width=\textwidth]{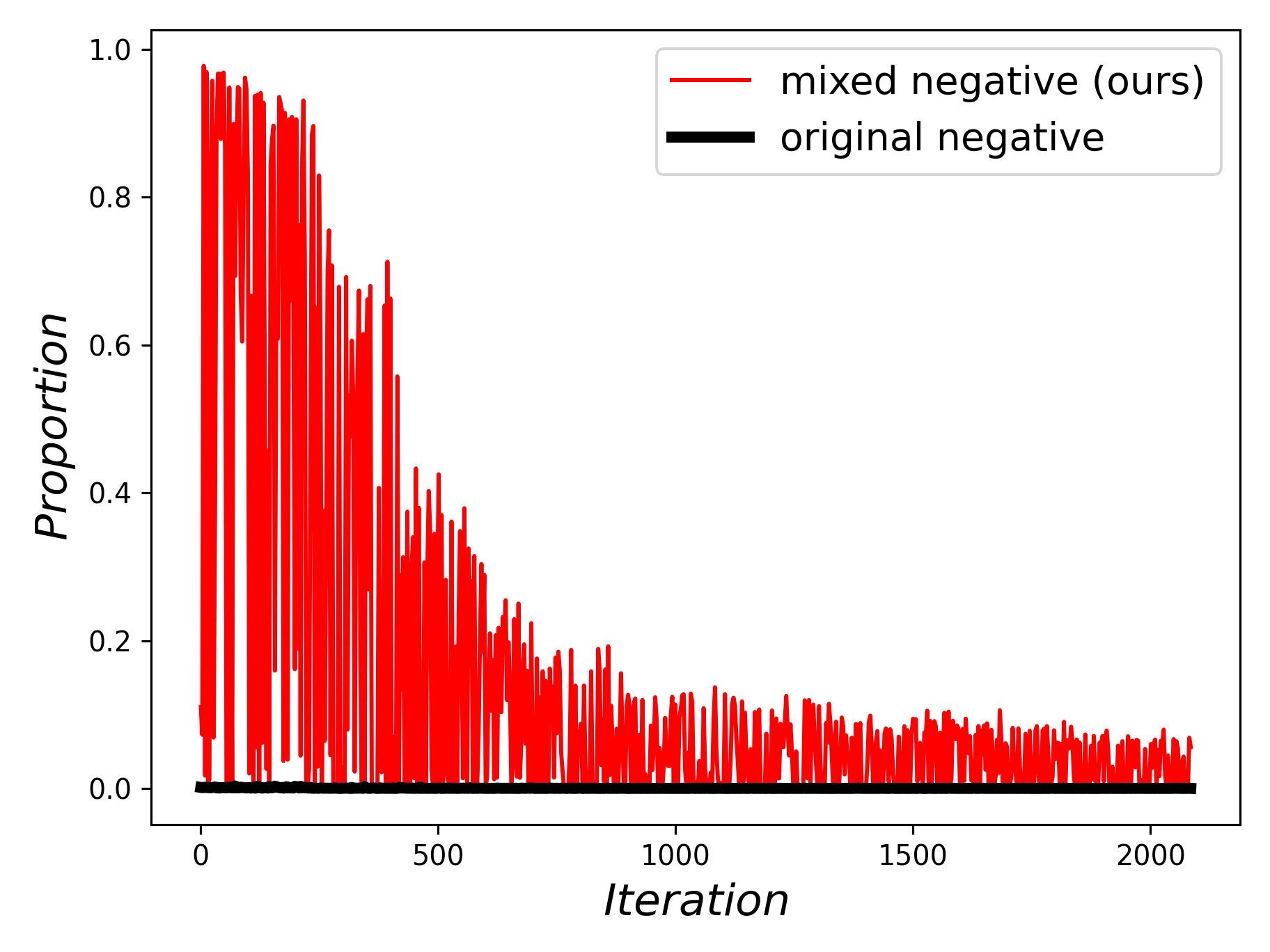}
\captionof{figure}{Hard negative proportion by $I$-to-mixed samples' similarities on Flickr30k.}
\label{app:fig:flickr_ihnp}
\end{minipage}
\hfill
\begin{minipage}[htpb!]{0.475\textwidth}
\centering
\includegraphics[width=\textwidth]{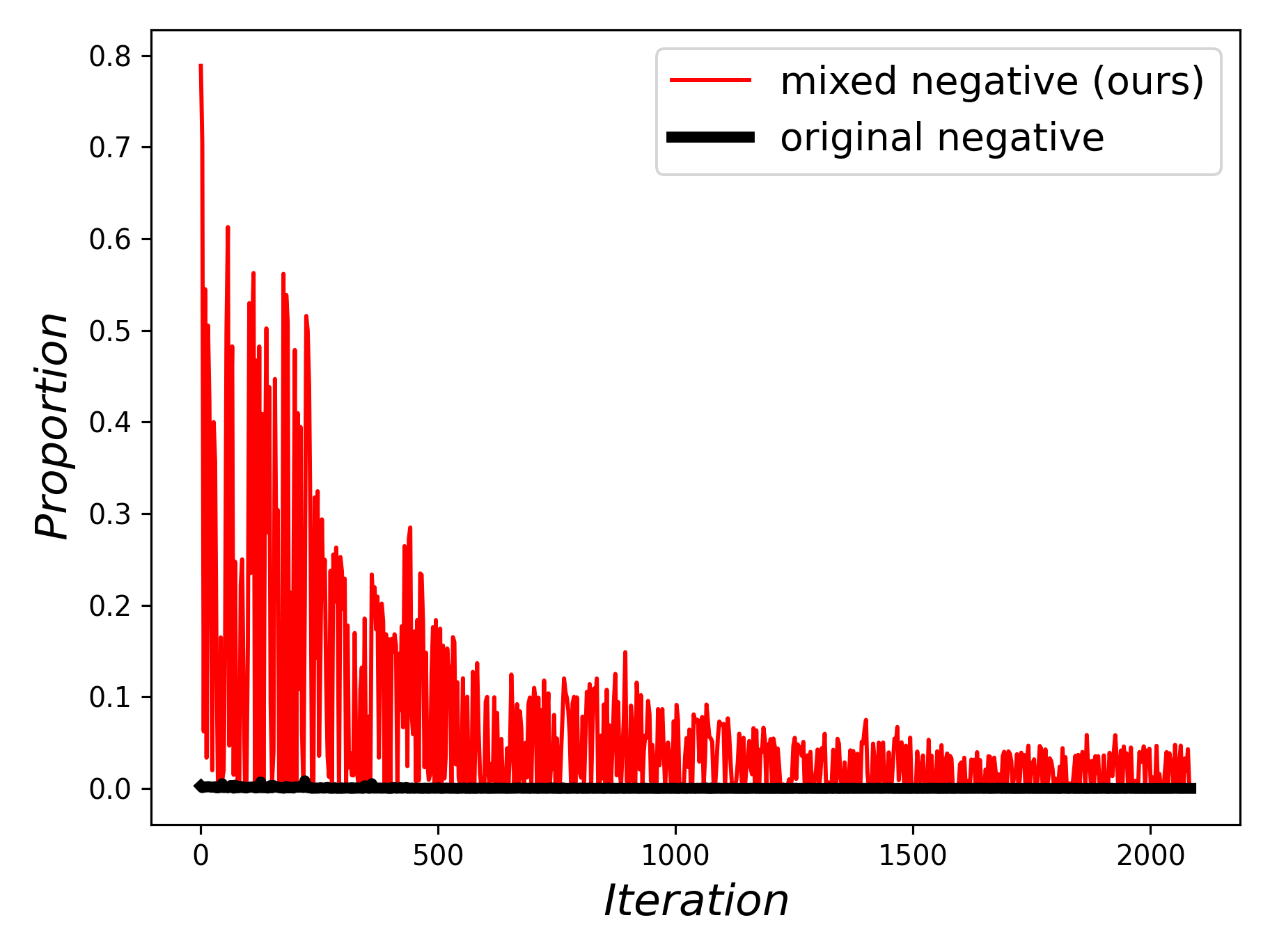}
\captionof{figure}{Hard negative proportion by $T$-to-mixed samples' similarities on Flickr30k.}
\label{app:fig:flickr_thnp}
\end{minipage}
\end{minipage}

When we train our model with learning objective $L_{\text{CLIP}}+L_{m^2\text{-Mix}}$, the similarities of negative pairs by our $m^2$-Mix significantly higher than not only original negatives' similarities but also positive similarities. Especially in the early training iterations, the proportion is about one in Fig \ref{app:fig:coco_ihnp} - \ref{app:fig:flickr_ihnp}, i.e., almost all of the in-batch negative pairs have higher similarities than positive ones. These results advocate our assumption in proof of Proposition 4.2. (of the main paper and Proposition 1 of this supplementary). That is, $m^2$-Mix-generated negative pairs empirically have higher similarities than positive ones. Even though such a hard negative proportion is decaying as the training progresses (by pursuing alignment), it still has not vanished to zero, i.e., uniformity is encouraged until the end of training while the magnitude is getting weaker.

Next, in Fig. \ref{app:fig_simplot_coco} left and \ref{app:fig_simplot_flickr} left, we present the in-batch averaged pairwise similarity from normal negative pairs and positive pairs, which contribute to the computation of $L_{\text{CLIP}}$. We evaluate the similarities under two scenarios whether our $L_{m^2\text{-Mix}}$ is adopted together with $L_{\text{CLIP}}$ (dashed line) or not (solid line). Among negative pair similarities, we only consider that of the top-1 hardest negative pair that is related to the definition of relative alignment (Eq. 2 in the main paper) and contrastive loss in an asymptotic scheme (Theorem 4.2. in the main paper).

As we can see in both Fig. \ref{app:fig_simplot_coco} and \ref{app:fig_simplot_flickr}, the top-1 \textbf{original negatives'} similarities and positives' similarities are increased by using $m^2$-Mix, even though the hard negatives generated by $m^2$-Mix are not explicitly consumed in $L_{\text{CLIP}}$. Consequently, the alignment is strongly enhanced (left side of Fig. \ref{app:fig_simplot_coco} and \ref{app:fig_simplot_flickr}). The results can be summarized as follows: (1) Whether $L_{m^2\text{-Mix}}$ is used or not, the similarities between positive pairs are larger than that of original negative pairs during the whole training time. (2) However, if $L_{m^2\text{-Mix}}$ is used with $L_{\text{CLIP}}$, the top-1 negative similarities are increased. (3) As a result, the similarities between positive pairs are increased to decrease the contrastive loss $L_{\text{CLIP}}$ (which converges asymptotically to triplet loss).

\begin{figure}[htpb!]
\centering
\includegraphics[width=\textwidth]{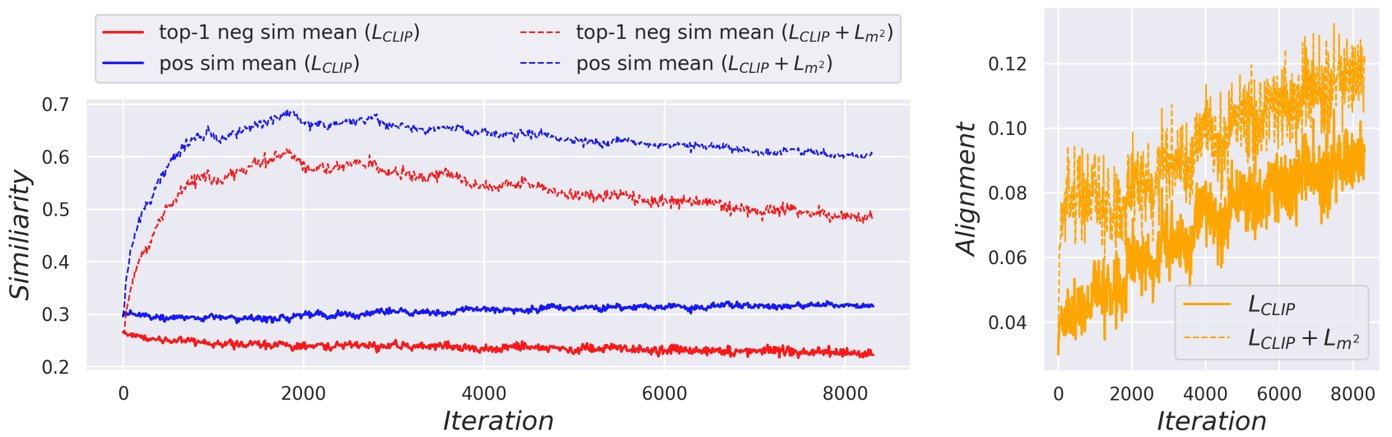}
\captionof{figure}{In-batch averaged pairwise similarity and alignment comparison over training iterations with and without $m^2$-Mix on MS COCO. (Left) similarities of original negative pairs (top-1 highest) and that of positive pairs. (Right) alignment comparison.}
\label{app:fig_simplot_coco}
\end{figure}

\begin{figure}[htpb!]
\centering
\includegraphics[width=\textwidth]{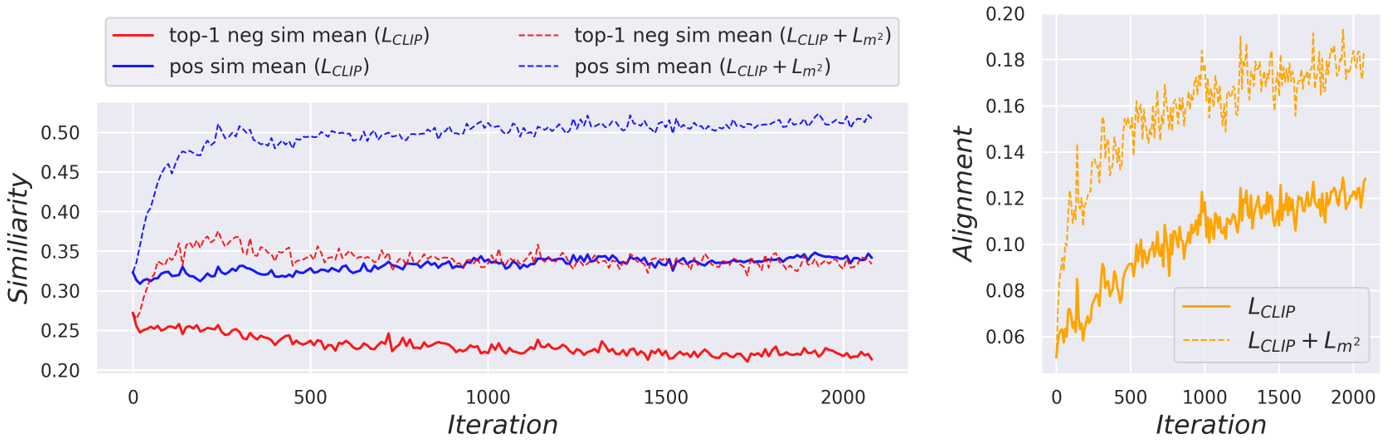}
\captionof{figure}{In-batch averaged pairwise similarity and alignment comparison over training iterations with and without $m^2$-Mix on Flickr30k. (Left) similarities of original negative pairs (top-1 highest) and that of positive pairs. (Right) alignment comparison.}
\label{app:fig_simplot_flickr}
\end{figure}

These empirical results imply that our $L_{m^2\text{-Mix}}$ implicitly helps the alignment in $L_{\text{CLIP}}$, and by minimizing $L_{\text{CLIP}} + L_{m^2\text{-Mix}}$, we can deal with both alignment and uniformity better (Proposition 4.2. of main paper).

\newpage

\section{Proof} 
\label{appendix:proof}

\begin{theorem}[Hardness of $m^2$-Mixed samples]
Let's assume that two random variables $x_{1}$ and $x_{2}$ follow the $M_{d}(\mu_{1},\kappa)$ and $M_{d}(\mu_{2},\kappa)$, von Mises–Fisher distribution with mean direction $\mu_{1}, \mu_{2}$ and concentration parameter $\kappa$ in $\mathbb{R}^{d}$, respectively.
Let $\widetilde{x}=x_{1}+x_{2}$ and $d=2$.
Then, $D_{KL}(p(x_{1})||p(\widetilde{x})) \leq D_{KL}(p(x_{1})||p(x_{2}))$ for sufficiently large $\kappa$.
\label{eq:appendix_thm}
\end{theorem}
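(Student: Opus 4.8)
The plan is to specialize to $d=2$, where $M_2(\mu,\kappa)$ is the von Mises law on the circle with density $\propto\exp(\kappa\cos(\phi-\phi_0))$ in the angular coordinate $\phi$, $\phi_0=\arg\mu$, and to reduce everything to the closed form for the KL divergence between two von Mises laws. Writing $\theta=\arccos(\mu_1^\top\mu_2)\in(0,\pi)$ and using $\E_{x\sim M_2(\mu,\kappa)}[x]=A(\kappa)\mu$ with $A(\kappa):=I_1(\kappa)/I_0(\kappa)$, one gets, for concentrations $\kappa_1,\kappa_2$ and angular separation $\psi$,
\[
D_{KL}\!\big(M_2(\mu_1,\kappa_1)\,\|\,M_2(\mu_2,\kappa_2)\big)=\log\frac{I_0(\kappa_2)}{I_0(\kappa_1)}+A(\kappa_1)\big(\kappa_1-\kappa_2\cos\psi\big).
\]
Applied to the right-hand side of the theorem this gives $D_{KL}(p(x_1)\|p(x_2))=A(\kappa)\,\kappa\,(1-\cos\theta)=2A(\kappa)\kappa\sin^2(\theta/2)$, so it remains to show the left-hand side is no larger.

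The key step is to identify the law of $\widetilde x=x_1+x_2$ projected to the circle. The sum-to-product identities give $\arg(x_1+x_2)=\tfrac12(\phi_1+\phi_2)$ on the event $|\phi_1-\phi_2|<\pi$, which has probability $1-o(1)$ as $\kappa\to\infty$ since each $\phi_i$ concentrates within $O(\kappa^{-1/2})$ of $\arg\mu_i$ and $\theta<\pi$. For large $\kappa$ the von Mises law $M_2(\mu_i,\kappa)$ is $O(1/\kappa)$-close (and has log-density matching up to $O(1/\kappa)$ on its effective support) to the wrapped Gaussian $N(\arg\mu_i,\kappa^{-1})$, whence $\widetilde\phi:=\tfrac12(\phi_1+\phi_2)$ is approximately $N(\widetilde\phi_0,(2\kappa)^{-1})$ with $\widetilde\phi_0=\tfrac12(\arg\mu_1+\arg\mu_2)$; that is, the direction of $\widetilde x$ is approximately $M_2(\widetilde\mu,2\kappa)$, where $\widetilde\mu$ is the angular bisector of $\mu_1,\mu_2$ and therefore satisfies $\mu_1^\top\widetilde\mu=\cos(\theta/2)$. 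This can be made rigorous either through a Laplace expansion of the density of $\widetilde x$ or through the characteristic function of $\widetilde\phi$, namely $\E[e^{it\widetilde\phi}]=\E[e^{i(t/2)\phi_1}]\,\E[e^{i(t/2)\phi_2}]$, which matches that of $M_2(\widetilde\mu,2\kappa)$ mode by mode up to $O(1/\kappa)$.

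Plugging $(\kappa_1,\kappa_2,\psi)=(\kappa,2\kappa,\theta/2)$ into the KL formula and using the classical asymptotics $A(\kappa)=1-\tfrac1{2\kappa}+O(\kappa^{-2})$ and $\log(I_0(2\kappa)/I_0(\kappa))=\kappa-\tfrac12\log 2+O(\kappa^{-1})$ yields
\[
D_{KL}(p(x_1)\|p(\widetilde x))=2\kappa\big(1-\cos(\theta/2)\big)+O(1),
\]
while $D_{KL}(p(x_1)\|p(x_2))=2\kappa\sin^2(\theta/2)+O(1)$, so their difference equals $2\kappa\big[\sin^2(\tfrac\theta2)-1+\cos(\tfrac\theta2)\big]+O(1)=2\kappa\cos(\tfrac\theta2)\big(1-\cos(\tfrac\theta2)\big)+O(1)$. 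For every fixed $\theta\in(0,\pi)$ the leading coefficient is strictly positive, so the inequality holds once $\kappa$ is sufficiently large; the degenerate cases $\theta=0$ (both sides vanish) and $\theta=\pi$ (where $\widetilde x$ collapses toward the origin and its direction becomes nearly uniform, giving $D_{KL}(p(x_1)\|p(\widetilde x))=O(\log\kappa)\ll\kappa$) are handled separately.

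The step I expect to be the real obstacle is the rigorous control in the third paragraph: upgrading the heuristic ``$\widetilde x$ behaves like $M_2(\widetilde\mu,2\kappa)$'' to a bound on $D_{KL}(p(x_1)\|p(\widetilde x))$ whose error is genuinely dominated by the $2\kappa\cos(\tfrac\theta2)(1-\cos(\tfrac\theta2))$ gap. This requires controlling both the normalizing constant and the tail of the true density of $\widetilde x$, and it is most delicate as $\theta\to\pi$, where that gap degenerates and the von Mises approximation for $\widetilde x$ itself breaks down; a careful saddle-point analysis of the density of $x_1+x_2$ is the clean route.
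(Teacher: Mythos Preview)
Your argument follows the same arc as the paper's---approximate the law of $\widetilde{x}$ by a von Mises distribution, plug into the closed-form KL between two von Mises laws, and compare via large-$\kappa$ asymptotics of $I_0,I_1$---but is considerably more explicit. The paper simply cites a result (Markovi\'c--Petrovi\'c) to take $\widetilde{x}\approx M_2(\mu_1+\mu_2,\widetilde{\kappa})$ with $\widetilde{\kappa}=A^{-1}\!\big(A(\kappa)^2\big)$, quotes the KL formula from another reference, invokes the standard expansions $I_0(\kappa)\approx e^\kappa(1+\tfrac1{8\kappa})/\sqrt{2\pi\kappa}$ and $I_1(\kappa)\approx e^\kappa(1-\tfrac3{8\kappa})/\sqrt{2\pi\kappa}$, and then asserts the inequality without displaying the final comparison. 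You instead derive $\widetilde{\kappa}\approx 2\kappa$ directly from the identity $\arg(x_1+x_2)=\tfrac12(\phi_1+\phi_2)$ (the vector sum \emph{averages} the angles, halving the variance), carry the asymptotics through to the explicit positive gap $2\kappa\cos(\theta/2)\big(1-\cos(\theta/2)\big)$, treat the boundary cases $\theta\in\{0,\pi\}$, and honestly flag the real technical work (upgrading the von Mises approximation of $p(\widetilde{x})$ to a rigorous KL bound). One substantive discrepancy worth noting: the paper's cited concentration $A^{-1}(A(\kappa)^2)$ is asymptotically $\kappa/2$, which is the concentration of the \emph{angular convolution} $\phi_1+\phi_2\pmod{2\pi}$, not of the direction of the vector sum; your interpretation of ``$\widetilde{x}=x_1+x_2$'' and the resulting $\widetilde{\kappa}\approx 2\kappa$ is the more natural one, and your explicit computation is what actually closes the argument.
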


\begin{proof}[Proof of Theorem \ref{eq:appendix_thm}]
Let $M_{d}(\mu,\kappa) = C_{d}(\kappa)\exp(\kappa \mu^{T}x)$, where $C_{d}(\kappa)=\frac{\kappa^{d/2-1}}{(2\pi)^{(d/2)}I_{d/2-1}(\kappa)}$.
Let $x_{1} \sim M_{2}(\mu_{1},\kappa)$ and $x_{2} \sim M_{2}(\mu_{1},\kappa)$, and $I$ denotes the modified Bessel function.
From~\cite{markovic2012bearing}, $\widetilde{x} \sim M_{2}(\widetilde{\mu},\widetilde{\kappa})$ where $\widetilde{\mu}=\mu_{1}+\mu_{2}$ and $\widetilde{\kappa} = A^{-1}(A(\kappa)A(\kappa))$, approximately, where $A(\kappa)=\frac{I_{1}(\kappa)}{I_{0}(\kappa)}$ and $A^{-1}$ is its inverse. 
From~\cite{jammalamadaka2021functional}, $D_{KL}(p(x_{1})||p(x_{2})) = \kappa A(\kappa) (1 - \text{cos}(\mu_{1}-\mu_{2}))$. 
Similarly, $D_{KL}(p(x_{1})||p(\widetilde{x}))
= \log I_{0}(\widetilde{\kappa}) - \log I_{0}(\kappa) + \widetilde{\kappa} A(\widetilde{\kappa}) (1 - \text{cos}(\mu_{1}-\mu_{2}))$.
From~\cite{martin2017precise} and~\cite{olivares2018simple}, $I_{0}(\kappa)\approx \frac{\exp(\kappa)}{\sqrt{2\pi \kappa}}(1+\frac{1}{8\kappa}) \text{ and } I_{1}(\kappa)\approx \frac{\exp(\kappa)}{\sqrt{2\pi \kappa}}(1-\frac{3}{8\kappa})$ for sufficiently large $\kappa$.
Therefore, $D_{KL}(p(x_{1})||p(\widetilde{x})) \leq D_{KL}(p(x_{1})||p(x_{2}))$ for sufficiently large $\kappa$.
\end{proof}

\begin{proposition}[Limiting behavior of $\bL_{\text{CLIP}}$ with $m^2$-Mix]
For sufficiently large $M$, as the temperature of contrastive loss $\tau \to 0^{+}$, the $\bL_{\text{CLIP}}$ and $\bL_{m^{2}\text{-Mix}}$ converge to the triplet loss with zero-margin (i.e., corresponding to negative \text{Alignment}) and negative \text{Uniformity}, respectively. That is: 
$\lim_{\tau \to 0^{+}} \bL_{\text{CLIP}} + \bL_{m^{2}\text{-Mix}} \simeq -(\text{Alignment}+\text{Uniformity})$ 
\label{eq:appendix_prop} 
\end{proposition}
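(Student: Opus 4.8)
The plan is to treat the two summands of $\bL_{\text{CLIP}}+\bL_{m^2\text{-Mix}}$ independently, show that the first collapses to (a positive multiple of) $-\text{Alignment}$ and the second to $-\text{Uniformity}$ as $\tau\to 0^{+}$, and then add them, letting the common $1/\tau$ scaling and all index-independent constants be absorbed into the ``$\simeq$''. Throughout I would use the $L_2$-normalization identity $\mathrm{sim}(a,b)=a\cdot b = 1-\tfrac12\|a-b\|^2$ to pass between dot products and the squared distances appearing in Eq.~\eqref{eq:alignment} and Eq.~\eqref{eq:uniformity}, and I would invoke the hardness result (Theorem~\ref{eq:thm}) together with the empirical hard-negative statistics to control the denominator of $C_{m^2}$.

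\textbf{Step 1: $\bL_{\text{CLIP}}\to -\text{Alignment}$.} First I would rewrite $C(I,T)=\tfrac1M\sum_i \tfrac1\tau\big[\,\tau\log\sum_{j}\exp(\mathrm{sim}(I_i,T_j)/\tau)-\mathrm{sim}(I_i,T_i)\,\big]$ and use the elementary fact that $\tau\log\sum_j\exp(a_j/\tau)\to\max_j a_j$ as $\tau\to0^{+}$. Then $\tau\, C(I,T)\to \tfrac1M\sum_i \max\{0,\ \max_{k\neq i}\mathrm{sim}(I_i,T_k)-\mathrm{sim}(I_i,T_i)\}$, the zero-margin triplet loss (this is the CLIP-side analogue of the limit in~\cite{wang2021understanding}). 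Substituting the normalization identity turns the bracket into $\tfrac12\big(\|I_i-T_i\|^2-\min_{k\neq i}\|I_i-T_k\|^2\big)$, whose batch average is $-\tfrac12\,\text{Alignment}$ up to the hinge truncation; symmetrizing over $C(T,I)$ finishes this step. The hinge is inactive in the regime of interest (positives at least as similar as the hardest negative), which is the sense in which one writes $\bL_{\text{CLIP}}\simeq-\text{Alignment}$.

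\textbf{Step 2: $\bL_{m^2\text{-Mix}}\to -\text{Uniformity}$.} I would fold the numerator into the denominator, $C_{m^2}(I,T)=\tfrac1M\sum_i\log\big(1+\sum_{j\neq i}\exp((I_i\cdot m_\lambda(I_j,T_j)-I_i\cdot T_i)/\tau)\big)$, and then use the hardness guarantee --- Theorem~\ref{eq:thm} plus the observation (Fig.~\ref{fig:m2mix_analysis}(b,c)) that $I_i\cdot m_\lambda(I_j,T_j)>I_i\cdot T_i$ for essentially all $j$ --- so that every exponent is positive and bounded away from $0$, making the ``$+1$'' negligible as $\tau\to0^{+}$. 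What remains is $\tfrac1M\sum_i\big[\log\sum_{j\neq i}\exp(I_i\cdot m_\lambda(I_j,T_j)/\tau)-I_i\cdot T_i/\tau\big]$; for large $M$ the empirical mean concentrates, so $\log\sum_{j\neq i}\exp(\cdot)=\log(M-1)+\log\E_j\exp(I_i\cdot m_\lambda(I_j,T_j)/\tau)$. Discarding the constant $\log(M-1)$ and rewriting $I_i\cdot m_\lambda = 1-\tfrac12\|I_i-m_\lambda(I_j,T_j)\|^2$ yields, up to constants absorbed by ``$\simeq$'', the potential $\log\E_{i,j}\exp(-\tfrac1{2\tau}\|I_i-m_\lambda(I_j,T_j)\|^2)$. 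Since $m_\lambda(I_j,T_j)$ traces the geodesic from $T_j$ (at $\lambda=0$) to $I_j$ (at $\lambda=1$) and $\lambda\sim\text{Beta}(\alpha,\alpha)$ concentrates mass near the endpoints, this is precisely the mixed-embedding form of $-\text{Uniformity}$ in Eq.~\eqref{eq:uniformity} (it reduces to $\log\E\exp(-c\|f(x_i)-g(y_j)\|^2)$ at the $\lambda\to0$ endpoint, matching Eq.~\eqref{eq:uniformity} up to the temperature rescaling). Adding Steps 1 and 2 and dropping the shared $1/\tau$ prefactor gives $\lim_{\tau\to0^{+}}\bL_{\text{CLIP}}+\bL_{m^2\text{-Mix}}\simeq-(\text{Alignment}+\text{Uniformity})$.

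\textbf{Main obstacle.} I expect the delicate point to be Step 2, specifically justifying the removal of the ``$+1$'': this needs the strict inequality $I_i\cdot m_\lambda(I_j,T_j)>I_i\cdot T_i$ to hold \emph{uniformly} over batch indices and over the support of $\lambda$, so that the discarded term is genuinely $o(1)$ after the limit --- this is exactly what Theorem~\ref{eq:thm} and the hard-negative-proportion plots are meant to support, but it enters as an assumption rather than an unconditional fact. Secondary care is needed for the interchange of the two limits ($\tau\to0^{+}$ versus $M\to\infty$), for the inactive hinge in Step 1, and for the mismatch between the fixed constant in Eq.~\eqref{eq:uniformity} and the diverging sharpness $1/(2\tau)$ of the limiting potential; all of these are what the ``$\simeq$'' (equivalence up to constants, positive rescaling, and monotone reparametrization) is doing, so I would state the proposition in that sense and make the reductions only to that precision.
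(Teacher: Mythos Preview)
Your proposal follows essentially the same route as the paper: use $\tau\log\sum_j e^{a_j/\tau}\to\max_j a_j$ to collapse $\bL_{\text{CLIP}}$ to a zero-margin triplet loss, and use the hardness assumption $I_i\cdot m_\lambda(I_j,T_j)>I_i\cdot T_i$ to drop the ``$+1$'' in $C_{m^2}$ and arrive at a log--sum--exp identified with $-\text{Uniformity}$. Your diagnosis of the main obstacle (Step~2 resting on a uniform hardness inequality supported by Theorem~\ref{eq:thm} and the empirical plots rather than proved unconditionally) is exactly how the paper handles it; the paper states that inequality as an explicit assumption in the derivation.

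There is one genuine slip in Step~1. You write that ``the hinge is inactive in the regime of interest (positives at least as similar as the hardest negative), which is the sense in which one writes $\bL_{\text{CLIP}}\simeq-\text{Alignment}$.'' This is backwards. If $\mathrm{sim}(I_i,T_i)\ge\max_{k\neq i}\mathrm{sim}(I_i,T_k)$ then the bracket $\tfrac12\big(\|I_i-T_i\|^2-\min_{k\neq i}\|I_i-T_k\|^2\big)\le 0$, the hinge clips it to zero, and the limiting loss is identically~$0$ --- not $-\text{Alignment}$. The identification $\tau\,C(I,T)\to -\tfrac12\,\text{Alignment}$ requires the \emph{opposite} condition $\max_{k\neq i}\mathrm{sim}(I_i,T_k)>\mathrm{sim}(I_i,T_i)$, i.e.\ the hinge does \emph{not} truncate. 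The paper states this explicitly (``when $\max_{j}(I_i\cdot T_j)-(I_i\cdot T_i)>0$'') and then remarks that when no such hard negatives exist the loss gives no learning signal --- which is precisely the motivation for $m^2$-Mix. So the condition you want in Step~1 is the same kind of hardness condition you correctly impose in Step~2, just for the original negatives.

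A minor difference: your attempt to push the Step~2 limit all the way back to the image--text form of Eq.~\eqref{eq:uniformity} via the $\lambda\to 0$ endpoint of $\text{Beta}(\alpha,\alpha)$ is extra. The paper stops at the mixed form, writing the limit as $-\text{Uniformity}(I,m_\lambda(I,T);\theta)$, i.e.\ uniformity between an original and a mixed embedding, and leaves the correspondence with Eq.~\eqref{eq:uniformity} at the level of~``$\simeq$''.
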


\begin{proof}[Proof of Proposition \ref{eq:appendix_prop}]
As noted in Section 3., given a training batch $\{x_i,y_i\}_{i=1}^{M}$ of image-text pairs and image and text encoders $f(\cdot;\theta_1)$ and $g(\cdot;\theta_2)$, and the $L_{2}$-normalized image-text embeddings are $(I,T)=\{f(x_i;\theta_1), g(y_i;\theta_2)\}_{i=1}^{M}$. Then, for $\theta=\{\theta_1, \theta_2\}$ and a scalar $\tau > 0$, a standard contrastive loss adopted by CLIP is formulated as:

\begin{align}
    C(I,T;\theta) = \frac{1}{M}\sum_{i=1}^{M}-\log\frac{\exp((I_i\cdot T_i)/\tau)}{\sum_{j=1}^{M}\exp{((I_i\cdot T_j)/\tau})} \label{a_eq:onesied_lclip}
\end{align}

Although the actual loss function is constructed by averaging two-way contrastive losses, i.e., $L_{CL}=\frac{1}{2}(C(I,T;\theta)+C(T,I;\theta))$, here, we consider only one-way contrastive loss $C(I,T;\theta)$ for simplicity. It is easily shown the derivation of the other side by changing the order of $I$ and $T$.

During pre-training, the temperature $\tau$ of CLIP converges to 0.01, which is a significantly small value that makes the pair-wise similarities sharp. Thus, it will be reasonable to consider an extreme case: when $\tau \to 0^{+}$. In this case, we can approximate the $C(I,T;\theta)$ as follow:

\begin{align}
    C(I,T;\theta) &= \lim_{\tau \to 0^{+}} \frac{1}{M}\sum_{i=1}^{M}-\log\frac{\exp(I_i\cdot T_i/\tau)}{\sum_{j=1}^{M}\exp{(I_i\cdot T_j/\tau})} \label{eq:lclip_approx} \\ \nonumber
          &= \lim_{\tau \to 0^{+}} \frac{1}{M}\sum_{i=1}^{M} - (I_i\cdot T_i)/\tau +\log\left[ \exp{(I_i\cdot T_i/\tau)} + \sum_{j \neq i}\exp{(I_i\cdot T_j/\tau)} \right] \\ \nonumber
          &= \lim_{\tau \to 0^{+}} \frac{1}{M}\sum_{i=1}^{M} \log\left[ 1 + \sum_{j \neq i}\exp{((I_i\cdot T_j) - (I_i\cdot T_i) /\tau)} \right] \\ \nonumber
          &= \lim_{\tau \to 0^{+}} \frac{1}{M}\sum_{i=1}^{M} \log\left[ 1 + \sum_{j \in \mathcal{J}(i,I,T)}\exp{((I_i\cdot T_j) - (I_i\cdot T_i) /\tau)} \right] \tag{where $\mathcal{J}(i,I,T) := \{j | (I_i\cdot T_j) > (I_i\cdot T_i)\}$} \\ \nonumber
          &= \lim_{\tau \to 0^{+}} \frac{1}{M}\sum_{i=1}^{M} \frac{1}{\tau} \max \left[ \max_{j}(I_i\cdot T_j) - (I_i\cdot T_i), 0\right] \\ \nonumber
          &\simeq \lim_{\tau \to 0^{+}} -\text{Alignment}(I,T;\theta) \tag{when $\max_{j}(I_i\cdot T_j) - (I_i\cdot T_i) > 0$ and $M$ is sufficiently large}
\end{align}
\noindent
where $\max_{j}(I_i\cdot T_j)$ denotes the maximum similarity among negative pairs. From this derivation, we show that the multi-modal contrastive loss only considers the top-1 hardest negative pairs to positive ones like a triplet loss. As a result, minimizing this loss function is equivalent to maximizing the relative alignment that we newly defined in this paper (in Section 3. Eq. 2 of main paper), e.g., for sufficiently large $M$, $\displaystyle \minimize_{\theta} \; C(I,T;\theta) \; \equiv \; \maximize_{\theta} \; \text{Alignment}(I,T;\theta)$. Note that, however, if there are no hard negatives that have higher similarity than positives, the above loss term does not give a meaningful learning signal because the loss already approaches zero. This issue is resolved by $m^2$-Mix, which generates hard negatives explicitly.

Next, when we consider the $m^2$-Mix-based contrastive loss under same case ($\tau \to 0^{+}$), we can obtain  another approximation like below:

\begin{align}
C_{m^{2}\text{-Mix}}(I,T;\theta) &= \lim_{\tau \to 0^{+}} \frac{1}{M}\sum_{i=1}^{M}-\log\frac{\exp(I_i\cdot T_i/\tau)}{\exp{(I_i\cdot T_i/\tau)} + \sum_{j \neq i}\exp{(I_i\cdot m_{\lambda}(I_i,T_j)/\tau})} \label{eq:lm2mix_approx} \\ \nonumber
&= \lim_{\tau \to 0^{+}} \frac{1}{M}\sum_{i=1}^{M} - (I_i\cdot T_i)/\tau +\log\left[ \exp{(I_i\cdot T_i/\tau)} + \sum_{j \neq i}\exp{(I_i\cdot  m_{\lambda}(I_i,T_j)/\tau)} \right] \\ \nonumber
&= \lim_{\tau \to 0^{+}} \frac{1}{M}\sum_{i=1}^{M} \log\left[ 1 + \sum_{j \neq i}\exp{((I_i\cdot  m_{\lambda}(I_i,T_j) - (I_i\cdot T_i)) /\tau)} \right] \\ \nonumber
&= \lim_{\tau \to 0^{+}} \frac{1}{M}\sum_{i=1}^{M} \log\left[ 1 + \sum_{j \neq i}\exp{(I_i\cdot m_{\lambda}(I_i,T_j)/\tau)} \right] \tag{by assuming $I_i\cdot  m_{\lambda}(I_i,T_j) > I_i\cdot T_i \; \text{for all} \; j \neq i$} \\ \nonumber
&= \lim_{\tau \to 0^{+}} \frac{1}{M}\sum_{i=1}^{M} \log \sum_{j \neq i}\exp{(I_i\cdot m_{\lambda}(I_i,T_j)/\tau)} \\ \nonumber
&\simeq \lim_{\tau \to 0^{+}} -\text{Uniformity}(I,m_{\lambda}(I_i,T_j);\theta) \tag{for sufficiently large $M$}
\end{align} 
\noindent

Where $m_{\lambda}(\cdot,\cdot)$ is the geodesic Mixup operation with mixing ratio $\lambda$. Based on above Eq. \ref{eq:lm2mix_approx}, we argue that our $m^2$-Mix asymptotically maximizes the uniformity given sufficiently in-batch large samples $M$, i.e., $\displaystyle \minimize_{\theta} \; C_{m^{2}\text{-Mix}}(I,T;\theta) \; \equiv \; \maximize_{\theta} \; \text{Uniformity}(I,m_{\lambda}(I_i,T_j);\theta)$.

To complete the proof, consider the two-side contrastive losses $L_{\text{CLIP}} = \frac{1}{2}(C(I,T;\theta) + C(T,I;\theta))$ and $L_{m^2\text{-Mix}} = \frac{1}{2}(C_{m^{2}\text{-Mix}}(I,T;\theta) + C_{m^{2}\text{-Mix}}(T,I;\theta))$. Then, by minimizing the combination of the standard contrastive loss $L_{\text{CLIP}}(\theta)$ (Eq. \ref{eq:lclip_approx}) and $L_{m^{2}\text{-Mix}}(\theta)$ (Eq. \ref{eq:lm2mix_approx}), we can approximately maximize both Alignment and Uniformity.
\end{proof}

\section{Limitation} 
While our method showcases its versatility on diverse downstream tasks, it induces computational overhead to some extent. In this work, we observed that uniformity and alignment (our modified formulation) are both somewhat correlated with downstream performance, so we argued that uniformity and alignment are crucial in multi-modal representation learning as well as uni-modal representation learning~\cite{wang2020understanding, wang2021understanding}. However, those two metrics are not absolute ones for various situations, e.g., when the modality-specific unique information is important, the higher uniformity and alignment can cause the loss of modality-specific information. The increased computational cost for additional contrastive loss terms is another limitation.

\end{document}